\newcommand{\Exp}{\mathop{\mathbb E}\displaylimits}
\newtheorem{lemma}{Lemma}
\newtheorem{theorem}{Theorem}
\newcommand\copyrighttext{%
  \footnotesize \textcopyright 2021 IEEE. Personal use of this material is permitted.
  Permission from IEEE must be obtained for all other uses, in any current or future
  media, including reprinting/republishing this material for advertising or promotional
  purposes, creating new collective works, for resale or redistribution to servers or
  lists, or reuse of any copyrighted component of this work in other works.}
\newcommand\copyrightnotice{%
\begin{tikzpicture}[remember picture,overlay]
\node[anchor=south,yshift=10pt] at (current page.south) {\fbox{\parbox{\dimexpr\textwidth-\fboxsep-\fboxrule\relax}{\copyrighttext}}};
\end{tikzpicture}%
}
\begin{document}

%
\title{Distributional Soft Actor-Critic: Off-Policy Reinforcement Learning for Addressing Value Estimation Errors}
%
%
%

\author{Jingliang Duan, Yang Guan, Shengbo Eben Li*, Yangang Ren, Qi Sun, and~Bo Cheng
\thanks{This study is supported by Beijing NSF with JQ18010, and NSF China with 51575293, and U20A20334. Special thanks should be given to TOYOTA for funding this study. Jingliang Duan and Yang Guan contributed equally to this work. All correspondences should be sent to S. Li with email: lisb04@gmail.com. }
\thanks{J. Duan, Y. Guan, S. Li, Y. Ren, Q. Sun, and B. Cheng are with State Key Lab of Automotive Safety and Energy, School of Vehicle and Mobility, Tsinghua University, Beijing, 100084, China. They are also with Center for Intelligent Connected Vehicles and Transportation, Tsinghua University. {\tt\small Email: duanjl15@163.com; (guany17, ryg18)@mails.tsinghua.edu.cn; (lishbo, qisun, chengbo)@tsinghua.edu.cn}.
}
}

\maketitle

\copyrightnotice

\begin{abstract}
In reinforcement learning (RL), function approximation errors are known to easily lead to the Q-value overestimations, thus greatly reducing policy performance. This paper presents a distributional soft actor-critic (DSAC) algorithm, which is an off-policy RL method for continuous control setting, to improve the policy performance by mitigating Q-value overestimations. We first discover in theory that learning a distribution function of state-action returns can effectively mitigate Q-value overestimations because it is capable of adaptively adjusting the update stepsize of the Q-value function. Then, a distributional soft policy iteration (DSPI) framework is developed by embedding the return distribution function into maximum entropy RL. Finally, we present a deep off-policy actor-critic variant of DSPI, called DSAC, which directly learns a continuous return distribution by keeping the variance of the state-action returns within a reasonable range to 
address exploding and vanishing gradient problems. \textcolor{black}{We evaluate DSAC on the suite of MuJoCo continuous control tasks, achieving the state-of-the-art performance.}
\end{abstract}

\begin{IEEEkeywords}
Reinforcement learning, overestimation, distributional soft actor-critic (DSAC).
\end{IEEEkeywords}

%
\IEEEpeerreviewmaketitle

\section{Introduction}
%
%
%
%
\IEEEPARstart{D}{eep} neural networks (NNs) provide rich representations that can enable reinforcement learning (RL) algorithms to master a variety of challenging domains, from games to robotic control \cite{mnih2015DQN,silver2016mastering,mnih2016A3C,silver2017mastering,duan2020hierarchical}. However, most RL algorithms tend to learn unrealistically high state-action values (i.e., Q-values), known as overestimations, thereby resulting in suboptimal policies.

The overestimations of RL were first found in the Q-learning algorithm \cite{watkins1989Q-learning}, which is the prototype of most existing value-based RL algorithms \cite{sutton2018reinforcement}. For this algorithm, van Hasselt \emph{et al.} (2016) demonstrated that any kind of estimation errors can induce an upward bias, irrespective of whether these errors are caused by system noise, function approximation, or any other sources \cite{van2016double_DQN}. The overestimation bias is firstly induced by the max operator over all noisy Q-estimates of the same state, which tends to prefer overestimated to underestimated Q-values \cite{thrun1993issues,lee2013bias,lee2019bias}. This overestimation bias will be further propagated and exaggerated through the temporal difference learning \cite{sutton2018reinforcement}, wherein the Q-estimate of a state is updated using the Q-estimate of its subsequent state. Deep RL algorithms, such as Deep Q-Networks (DQN) \cite{mnih2015DQN}, employ a deep NN to estimate the Q-value. Although the deep NN can provide rich representations with the potential for low asymptotic approximation errors, overestimations still exist, even in deterministic environments \cite{van2016double_DQN,Fujimoto2018TD3}. Fujimoto \emph{et al.} (2018) showed that the overestimation problem also persists in actor-critic RL \cite{Fujimoto2018TD3}, such as Deterministic Policy Gradient (DPG) and Deep DPG (DDPG) \cite{silver2014DPG,lillicrap2015DDPG}. In practice, inaccurate estimation exists in almost all RL algorithms because, on the one hand, any algorithm will introduce some estimation biases and variances, simply due to the true Q-values are initially unknown \cite{sutton2018reinforcement}. On the other hand, function approximation errors are usually unavoidable. This is particularly problematic because inaccurate estimation can cause arbitrarily suboptimal actions to be overestimated, resulting in a suboptimal policy.

To reduce overestimations in standard Q-learning, Double Q-learning \cite{hasselt2010double_Q} was developed to decouple the max operation into action selection and evaluation. To update one of these two Q-networks, one Q-network is used to determine the greedy policy, while another Q-network is used to determine its value, resulting in unbiased estimates. Double DQN \cite{van2016double_DQN}, a deep variant of Double Q-learning, deals with the overestimation problem of DQN, in which the target Q-network of DQN provides a natural candidate for the second Q-network. 
However, these two methods can only handle discrete action spaces. Fujimoto \emph{et al.} (2018) developed actor-critic variants of the standard Double DQN and Double Q-learning for continuous control, by making action selections using the policy optimized with respect to the corresponding Q-estimate \cite{Fujimoto2018TD3}. However, the actor-critic Double DQN suffers from similar overestimations as DDPG, because the online and target Q-estimates are too similar to provide an independent estimation. While actor-critic Double Q-learning is more effective, it introduces additional Q and policy networks at the cost of increasing the computation time for each iteration. Finally, Fujimoto \emph{et al.} (2018) proposed Clipped Double Q-learning by taking the minimum value between the two Q-estimates \cite{Fujimoto2018TD3}, which is used in Twin Delayed Deep Deterministic policy gradient (TD3) and Soft Actor-Critic (SAC) \cite{Haarnoja2018SAC,Haarnoja2018ASAC}. 
However, this method may introduce a considerable underestimation bias and still requires an additional Q-network.

{\color{black}{In this paper, we propose a new RL algorithm, called distributional soft actor-critic (DSAC), to improve policy performance by mitigating Q-value overestimations. The contributions and novelty of this paper are summarized as follows:
\begin{enumerate}
    \item A distributional soft policy iteration (DSPI) framework is developed by embedding the return distribution function in maximum entropy RL to learn a continuous distribution of state-action returns (also called return distribution). The impact of the return distribution learning on the accuracy of Q-value estimation was barely discussed in existing distributional RL algorithms, such as \cite{bellemare2017C51,dabney2018quantileregre,davney2018quantilenet,rowland2018distributional_ana,lyle2019comparative_dstributional,barth-maron2018D4PG}.  In this paper, we first found that the Q-value overestimations can be mitigated by learning a distribution function of state-action returns. This is because that compared with most RL algorithms that directly learn the expectation of state-action returns (i.e., Q-value) \cite{mnih2015DQN, lillicrap2015DDPG,van2016double_DQN, mnih2016A3C, Fujimoto2018TD3,Haarnoja2018SAC}, the return distribution learning is capable of adaptively adjusting the update stepsize of Q-values. 
    \item  Based on the developed DSPI framework, we propose the DSAC algorithm by replacing the clipped double Q-learning of SAC \cite{Haarnoja2018SAC,Haarnoja2018ASAC} with the return distribution learning. In comparison with RL algorithms that use double value networks to mitigate overestimations \cite{hasselt2010double_Q,van2016double_DQN,Fujimoto2018TD3,Haarnoja2018SAC,Haarnoja2018ASAC}, DSAC improves the Q-value estimation accuracy by only employing a single return distribution network, which also leads to higher time efficiency.
    \item Different from existing distributional RL algorithms that learn a discrete return distribution \cite{bellemare2017C51,dabney2018quantileregre,davney2018quantilenet,rowland2018distributional_ana,lyle2019comparative_dstributional,barth-maron2018D4PG}, the proposed DSAC is capable of learning a continuous return distribution by keeping the variance of the state-action returns within a reasonable range to address exploding and vanishing gradient problems. Therefore, DSAC relaxes the need for human-designed discrete ranges and intervals. Besides, compared with most distributional RL algorithms that can only handle discrete and low-dimensional action spaces \cite{bellemare2017C51,dabney2018quantileregre,davney2018quantilenet,rowland2018distributional_ana,lyle2019comparative_dstributional}, DSAC is applicable to continuous control settings by optimizing an independent stochastic policy network.
    \item Experiments on MuJoCo benchmarks demonstrate that the proposed DSAC algorithm outperforms or matches all baselines across all benchmark tasks in terms of the final performance.
\end{enumerate}
}}

The paper is organized as follows. In Section \ref{sec.related_work}, we introduce the related works. Section \ref{sec:preliminaries} describes some preliminaries of RL and develops a DSPI framework. In Section \ref{sec:analysis}, we analyze the role of the distributional return function in solving overestimations. Section \ref{sec:DSAC} presents the DSAC algorithm and PABAL architecture. In Section \ref{sec:experiments}, we present experimental results that show the efficacy of DSAC. Section \ref{sec:conclusion} concludes this paper.

\section{Related Work}
\label{sec.related_work}
Over the last decade, numerous deep RL algorithms have appeared \cite{mnih2015DQN,lillicrap2015DDPG,schulman2015TRPO,mnih2016A3C,schulman2017PPO,heess2017DPPO,Fujimoto2018TD3,barth-maron2018D4PG,Haarnoja2018SAC}. This paper aims to propose a new RL algorithm to mitigate Q-value overestimations by learning a distribution of state-action returns, thereby improving policy performance. We also incorporate the off-policy formulation to improve sample efficiency, and the maximum entropy framework based on the stochastic policy to encourage exploration. Besides, our algorithm mainly focuses on continuous control setting.  With reference to algorithms such as DDPG \cite{lillicrap2015DDPG}, the off-policy learning and continuous control can be easily enabled by learning separate Q and policy networks in an actor-critic architecture. Therefore, we mainly review prior works on the maximum entropy framework and distributional RL in this section.

Maximum entropy RL favors stochastic policies by augmenting the optimization objective with the expected policy entropy. While many prior RL algorithms consider the policy entropy, they only use it as a regularizer \cite{schulman2015TRPO,mnih2016A3C,schulman2017PPO}. Recently, several papers have noted the connection between Q-learning and policy gradient methods in the setting of the maximum entropy framework \cite{ODonoghue2016PGQL,schulman2017PG_Soft-Q,nachum2017bridging}. Early maximum entropy RL algorithms usually only consider the policy entropy of current states \cite{sallans2004Boltzmann_explore,ODonoghue2016PGQL,Fox2016G-learning}. Unlike them, soft Q-learning directly augments the reward with an entropy term, such that the optimal policy aims to reach states where they will have high policy entropy in the future \cite{Haarnoja2017Soft-Q}. Haarnoja \emph{et al.} (2018) further developed an off-policy actor-critic variant of the Soft Q-learning for large continuous domains, called SAC \cite{Haarnoja2018SAC,Haarnoja2018ASAC}. 
In this paper, we build on the work of \cite{Haarnoja2018SAC,Haarnoja2018ASAC} for implementing the maximum entropy framework.

The distributional RL, in which one models the distribution over returns, whose expectation is the value function, was recently introduced by Bellemare \emph{et al.} \cite{bellemare2017C51}. They proposed a distributional RL algorithm, called C51, which achieved great performance improvements on many Atari 2600 benchmarks. Since then, many distributional RL algorithms and their inherent analyses have appeared in literature  \cite{dabney2018quantileregre,davney2018quantilenet,rowland2018distributional_ana,lyle2019comparative_dstributional}. Like DQN, these works can only handle discrete and low-dimensional action spaces, as they select actions according to their Q-networks. Barth-Maron \emph{et al.} (2018) combined the distributional return function within an actor-critic framework for policy learning in continuous control setting domains, and proposed the Distributed Distributional Deep Deterministic Policy Gradient algorithm (D4PG) \cite{barth-maron2018D4PG}. Inspired by these distributional RL researches, Dabney \emph{et al.} (2020) found that the brain represents possible future rewards not as a single mean, but instead as a probability distribution through mouse experiments \cite{dabney2020dopamine-basedRL}. Existing distributional RL algorithms usually learn a discrete return distribution because it is computationally friendly. However, this poses a problem: we need to divide the return distribution into multiple discrete intervals in advance. This is inconvenient because different tasks usually require different division numbers and intervals. In addition, the role of distributional return function in solving overestimations was barely discussed before.

\section{Preliminaries and Distributional Soft Policy Iteration}
\label{sec:preliminaries}

\textcolor{black}{In this section, we first describe the notations and introduce the concept of maximum entropy RL. Then the distributional soft policy iteration (DSPI) framework is developed.}
\subsection{Notation}
\label{sec:notation}
We consider the standard reinforcement learning (RL) setting wherein an agent interacts with an environment $\mathcal{E}$ in discrete time. This environment can be modeled as a Markov Decision Process, defined by the tuple $(\mathcal{S},\mathcal{A},\mathcal{R},p)$. The state space $\mathcal{S}$ and action space $\mathcal{A}$ are assumed to be continuous,  $R(r_t|s_t,a_t):\mathcal{S}\times\mathcal{A}\rightarrow\mathcal{P}(r_t)$ is a stochastic reward function mapping a state-action pair $(s_t,a_t)$ to a distribution over a set of bounded rewards, and the unknown state transition probability $p(s_{t+1}|s_t,a_t):\mathcal{S}\times\mathcal{A}\rightarrow \mathcal{P}(s_{t+1})$ maps a given $(s_t, a_t)$ to the probability distribution over $s_{t+1}$. For the sake of simplicity, the current and next state-action pairs are also denoted as $(s,a)$ and $(s',a')$, respectively.

At each time step $t$, the agent receives a state $s_t\in \mathcal{S}$ and selects an action $a_t\in \mathcal{A}$. In return, the agent receives the next state $s_{t+1}\in \mathcal{S}$ and a scalar reward $r_t\sim R(s_t,a_t)$. The process continues until the agent reaches a terminal state after which the process restarts. The agent's behavior is defined by a stochastic policy $\pi(a_t|s_t): \mathcal{S}\rightarrow\mathcal{P}(a_t)$, which maps a given state to a probability distribution over actions. We will use $\rho_{\pi}(s)$ and $\rho_{\pi}(s,a)$ to denote the state and state-action distribution induced by policy $\pi$.

\subsection{Maximum Entropy RL}
\label{sec:max_entropy}
The goal in standard RL is to learn a policy which maximizes the expected future accumulated return $\Exp_{(s_{i\ge t},a_{i\ge t})\sim \rho_{\pi}, r_{i\ge t}\sim R(\cdot|s_i,a_i)}[\sum^{\infty}_{i=t}\gamma^{i-t} r_i]$, where $\gamma \in [0,1)$ is the discount factor. In this paper, we consider a more general entropy-augmented objective \cite{Haarnoja2017Soft-Q,Haarnoja2018SAC,Haarnoja2018ASAC}, which augments the reward with a policy entropy term $\mathcal{H}$,
\begin{equation}
\label{eq.policy_objective}
J_{\pi} = \Exp_{\substack{(s_{i \ge t},a_{i \ge t})\sim \rho_{\pi},\\r_{i \ge t}\sim R(\cdot|s_i,a_i)}}\Big[\sum^{\infty}_{i=t}\gamma^{i-t} [r_i+\alpha\mathcal{H}(\pi(\cdot|s_i))]\Big],
\end{equation}
{\color{black}{where 
\begin{equation}
\nonumber
\begin{aligned}
\mathcal{H}(\pi(\cdot|s))&= -\int_{a\in \mathcal{A}}\pi(a|s)\log\pi(a|s){\rm d}a\\
&=\Exp_{a\sim\pi(\cdot|s)}\big[-\log\pi(a|s)\big].
\end{aligned}
\end{equation}}}This objective improves the exploration efficiency of the policy by maximizing both the expected future return and policy entropy. The temperature parameter $\alpha$ determines the relative importance of the entropy term against the reward. Maximum entropy RL gradually approaches the conventional RL as $\alpha \rightarrow 0$.

We use $G_t=\sum^{\infty}_{i=t}\gamma^{i-t} [r_i-\alpha \log\pi(a_i|s_i)]$ to denote the entropy-augmented accumulated return from $s_t$, also called soft return. The soft Q-value of policy $\pi$ is defined as 
\begin{equation}
\label{eq.Q_definition}
Q^{\pi}(s_t,a_t)=\Exp_{r\sim R(\cdot|s_t,a_t)}[r]+\gamma\Exp_{\substack{(s_{i>t},a_{i>t})\sim \rho_{\pi},\\ r_{i>t}\sim R(\cdot|s_i,a_i)}}[G_{t+1}],
\end{equation}
which describes the expected soft return for selecting $a_t$ in state $s_t$ and thereafter following policy $\pi$. 

The optimal maximum entropy policy is learned by a maximum entropy variant of the policy iteration method, which alternates between soft policy evaluation and soft policy improvement, called soft policy iteration. In the soft policy evaluation process, given a policy $\pi$, the soft Q-value can be learned by repeatedly applying a soft Bellman operator $\mathcal{T}^{\pi}$ under policy $\pi$ given by
\begin{equation}
\label{eq.soft_bellman}
\begin{aligned}
\mathcal{T}^{\pi}Q^{\pi}(s,&a)=\mathbb{E}_{r\sim R(\cdot|s,a)}[r]+\\
&\gamma \mathbb{E}_{s'\sim p,a'\sim \pi}[Q^{\pi}(s',a')-\alpha \log\pi(a'|s')\big].
\end{aligned}
\end{equation}

The goal of the soft policy improvement process is to find a new policy $\pi_{\rm{new}}$ that is better than the current policy $\pi_{\rm{old}}$, such that $J_{\pi_{\rm{new}}}\ge J_{\pi_{\rm{old}}}$.
Hence, we can update the policy directly by maximizing the entropy-augmented objective in \eqref{eq.policy_objective} in terms of the soft Q-value,
\begin{equation}
\label{eq.policy_imp}
\begin{aligned}
\pi_{\rm{new}}&=\arg\max_{\pi}J_{\pi}\\
&=\arg\max_{\pi} \Exp_{s\sim \rho_{\pi},a\sim \pi}\big[Q^{\pi_{\rm{old}}}(s,a)-\alpha \log\pi(a|s)\big].
\end{aligned}
\end{equation}

The convergence and optimality of soft policy iteration have been verified in \cite{Haarnoja2017Soft-Q,Haarnoja2018SAC,Haarnoja2018ASAC, schulman2017PG_Soft-Q}.

\subsection{Distributional Soft Policy Iteration}
\label{sec:DSPI}
\textcolor{black}{Next, we develop the distributional soft policy iteration (DSPI) framework by extending the maximum entropy RL into a distributional learning version. Firstly, we define the soft state-action return of policy $\pi$ from a state-action pair $(s_t,a_t)$ as} 
\begin{equation}
\nonumber
Z^{\pi}(s_t,a_t)=r_t+\gamma G_{t+1}\big|{}_{(s_{i>t},a_{i>t})\sim \rho_{\pi}, r_{i\ge t}\sim R(\cdot|s_i,a_i)},
\end{equation}
which is usually a random variable due to the randomness in the state transition $p$, reward function $R$ and policy $\pi$. From \eqref{eq.Q_definition}, it is clear that 
\begin{equation}
\label{eq.Q_equal_exp_Z}
Q^{\pi}(s,a)=\mathbb{E}[Z^{\pi}(s,a)].
\end{equation}
Instead of just considering the expected state-action return $Q^{\pi}(s,a)$, one can choose to directly model the distribution of the soft returns $Z^{\pi}(s,a)$. We define $\mathcal{Z}^{\pi}(Z^{\pi}(s,a)|s,a): \mathcal{S}\times\mathcal{A}\rightarrow \mathcal{P}(Z^{\pi}(s,a))$ as a mapping from $(s,a)$ to a distribution over soft state-action returns, and call it the \emph{soft state-action return distribution} or distributional value function. The distributional variant of the Bellman operator in 
the maximum entropy framework can be derived as 
\begin{equation}\label{eq.soft_distri_bellman}
\mathcal{T}^{\pi}_{\mathcal{D}}Z^{\pi}(s,a) \overset{D}{=}r+\gamma( Z^{\pi}(s',a')-\alpha \log\pi(a'|s')),
\end{equation}
where $r\sim R(\cdot|s,a),s'\sim p,a'\sim\pi$, and $A \overset{D}{=} B$ denotes that two random variables $A$ and $B$ have equal probability laws. The distributional variant of policy iteration has been proved to converge to the optimal return distribution and policy uniformly in \cite{bellemare2017C51}. We can further prove that DSPI which alternates between \eqref{eq.soft_distri_bellman} and \eqref{eq.policy_imp} also leads to policy improvement with respect to the maximum entropy objective \eqref{eq.policy_objective}. Details are provided in Appendix \ref{appen.proof}.

Suppose $\mathcal{T}^{\pi}_{\mathcal{D}}Z(s,a) \sim \mathcal{T}^{\pi}_{\mathcal{D}}\mathcal{Z}(\cdot|s,a)$, where $\mathcal{T}^{\pi}_{\mathcal{D}}\mathcal{Z}(\cdot|s,a)$ denotes the distribution of $\mathcal{T}^{\pi}_{\mathcal{D}}Z(s,a)$. To implement \eqref{eq.soft_distri_bellman}, we can directly update the soft return distribution by 
\begin{equation}
\label{eq.distributional_Bellman}
\mathcal{Z}_{\rm{new}} =  \arg\min_{\mathcal{Z}}\mathop{\mathbb{E}}_{(s,a)\sim\rho_{\pi}}\big[d(\mathcal{T}^{\pi}_{\mathcal{D}}\mathcal{Z}_{\rm{old}}(\cdot|s,a),\mathcal{Z}(\cdot|s,a))\big],
\end{equation}
where $d$ is some metric to measure the distance between two distributions. For calculation convenience, many practical distributional RL algorithms employ Kullback-Leibler (KL) divergence, denoted as $D_{\rm{KL}}$, as the metric \cite{bellemare2017C51,barth-maron2018D4PG}.

\section{Overestimation Bias}
\label{sec:analysis}
This section mainly focuses on the impact of the state-action return distribution learning on reducing overestimation. Therefore, the entropy coefficient $\alpha$ is assumed to be $0$ here. \textcolor{black}{Previous studies analyzed the Q-value estimation bias of Q-learning in tabular cases \cite{watkins1989Q-learning,hasselt2010double_Q}. In section \ref{sec:overestimate_in_Q-learning}, we derive the analytical expression of Q-value estimation bias from the perspective of function approximation. Then, Section \ref{sec:overesimation_in_distributional} analyzes the Q-estimate bias of the return distribution learning and reveals its mechanism to mitigate overestimations.}

\subsection{Overestimation in Q-learning}
\label{sec:overestimate_in_Q-learning}
In Q-learning with discrete actions, suppose the Q-value is approximated by a Q-function $Q_{\theta}(s,a)$ with parameters $\theta$. Defining the greedy target $y = \mathbb{E}[r] + \gamma\mathbb{E}_{s'}[\max_{a'}Q_{\theta}(s',a')]$, the Q-estimate $Q_{\theta}(s,a)$ can be updated by minimizing the loss $(y-Q_{\theta}(s,a))^2/2$ using gradient descent methods, i.e., 
\begin{equation}
\label{eq.approximate_theta}
\theta_{\rm{new}}=\theta+\beta (y-Q_{\theta}(s,a)) \nabla_{\theta}Q_{\theta}(s,a), 
\end{equation}
where $\beta$ is the learning rate. However, in practical applications, the Q-estimate $Q_{\theta}(s,a)$ usually contains random errors, which may be caused by system noises and function approximation. Denoting the current true Q-value as $\tilde{Q}$, we assume
\begin{equation}
\label{eq.approximate_error}
Q_{\theta}(s,a)=\tilde{Q}(s,a)+\epsilon_Q,
\end{equation}
where the random error $\epsilon_Q$ has zero mean and is independent of $(s,a)$ and $\theta$. To distinguish the random error of $Q_{\theta}(s,a)$ and $Q_{\theta}(s',a')$, the random error of $Q_{\theta}(s',a')$ is denoted as $\epsilon_Q'$. Clearly, $\epsilon_Q'$ may cause inaccuracy on the right-hand side of \eqref{eq.approximate_theta}. Let $\theta_{\rm{true}}$ represent the post-update parameters obtained based on true target $\tilde{y}$, that is,
\begin{equation}
\nonumber
\theta_{\rm{true}}=\theta+\beta (\tilde{y}-Q_{\theta}(s,a)) \nabla_{\theta} Q_{\theta}(s,a),
\end{equation}
where $\tilde{y}=\mathbb{E}[r] + \gamma\mathbb{E}_{s'}[\max_{a'}\tilde{Q}(s',a')]$.

Supposing $\beta$ is sufficiently small, the post-update Q-function can be well-approximated by linearizing around $\theta$ using Taylor's expansion:
\begin{equation}
\nonumber
Q_{\theta_{\rm{true}}}(s,a)\approx Q_{\theta}(s,a)+\beta (\tilde{y}-Q_{\theta}(s,a)) \|\nabla_{\theta} Q_{\theta}(s,a)\|^2_2,
\end{equation}
\begin{equation}
\nonumber
Q_{\theta_{\rm{new}}}(s,a)\approx Q_{\theta}(s,a)+\beta (y-Q_{\theta}(s,a)) \|\nabla_{\theta} Q_{\theta}(s,a)\|^2_2.
\end{equation}

Then, in expectation, the estimate bias of post-update Q-estimate $Q_{\theta_{\rm{new}}}(s,a)$ is 
\begin{equation}
\nonumber
\begin{aligned}
\Delta(s,a)&= \mathbb{E}_{\epsilon_Q'}[Q_{\theta_{\rm{new}}}(s,a)-Q_{\theta_{\rm{true}}}(s,a)]\\
&\approx\beta\big(\mathbb{E}_{\epsilon_Q'}[y]-\tilde{y}\big)\|\nabla_{\theta} Q_{\theta}(s,a)\|^2_2\\
&=\beta\gamma\big(\mathbb{E}_{\epsilon_Q'}\big[\mathbb{E}_{s'}[\max_{a'}Q(s',a')]\big]-\\
&\qquad \qquad \qquad \mathbb{E}_{s'}[\max_{a'}\tilde{Q}(s',a')]\big)\|\nabla_{\theta} Q_{\theta}(s,a)\|^2_2.
\end{aligned}
\end{equation}
Defining
\begin{equation}
\label{eq.delta_definition}
\begin{aligned}
\delta&=\mathbb{E}_{\epsilon_Q'}\big[\mathbb{E}_{s'}[\max_{a'}Q(s',a')]\big]-\mathbb{E}_{s'}[\max_{a'}\tilde{Q}(s',a')]\\
&=\mathbb{E}_{s'}\big[\mathbb{E}_{\epsilon_Q'}[\max_{a'}Q_{\theta}(s',a')]-\max_{a'}\tilde{Q}(s',a')\big]\\
&=\mathbb{E}_{s'}\big[\mathbb{E}_{\epsilon_Q'}[\max_{a'}(\tilde{Q}_{\theta}(s',a')+\epsilon_Q']-\max_{a'}\tilde{Q}(s',a')\big],
\end{aligned}
\end{equation}
$\Delta(s,a)$ can be rewritten as: 
\begin{equation}
\nonumber
\Delta(s,a)\approx\beta \gamma \delta \|\nabla_{\theta} Q_{\theta}(s,a)\|_2^2.
\end{equation}

Although $\epsilon_Q'$ is independent of $(s',a')$, it cannot be extracted from the max operator of $\max_{a'}(\tilde{Q}(s',a')+\epsilon_Q')$. This is because for each $(s', a')$, $\epsilon_Q'$ is a random variable rather than a fixed value. In fact,
it has been verified by previous researches that $\mathbb{E}_{\epsilon_Q'}[\max_{a'}(\tilde{Q}(s',a')+\epsilon_Q')] - \max_{a'}\tilde{Q}(s',a')\ge0$ \cite{thrun1993issues,hasselt2010double_Q}. Therefore, it is clear that
\begin{equation}
\nonumber
\Delta(s,a)\ge0,
\end{equation}
which indicates that $\Delta(s,a)$ is an upward bias. In fact, any kind of estimation errors can induce an upward bias due to the max operator. Although it is reasonable to expect a small upward bias caused by single update, these overestimation errors can be further exaggerated through temporal difference (TD) learning, which may result in large overestimation bias and suboptimal policy updates. 

\subsection{Return Distribution for Reducing Overestimation}
\label{sec:overesimation_in_distributional}

Before discussing the distributional version of Q-learning, we first assume that the random returns $Z(s,a)$ obey a Gaussian distribution $\mathcal{Z}(\cdot|s,a)$. Suppose the mean (i.e., Q-value) and standard deviation of the Gaussian distribution are approximated by two independent functions $Q_{\theta}(s,a)$ and $\sigma_{\psi}(s,a)$, with parameters $\theta$ and $\psi$, i.e., $\mathcal{Z}_{\theta,\psi}(\cdot|s,a)=\mathcal{N}(Q_{\theta}(s,a),{\sigma_{\psi}(s,a)}^2)$.

Similar to standard Q-learning, we first define a random greedy target $y_D=r+\gamma Z(s',a'^*)$, where $a'^*=\arg\max_{a'}Q_{\theta}(s',a')$. Suppose $y_D\sim \mathcal{Z}^{\rm{target}}(\cdot|s,a)$, which is also assumed to be a Gaussian distribution. Note that even if $Z(s,a)$ and $y_D$ are not strictly Gaussian, we can still use the Gaussian to approximate their distributions, which will not affect the subsequent analysis. Since $\mathbb{E}[y_D]=\mathbb{E}[r]+ \gamma\mathbb{E}_{s'}[\max_{a'}Q_{\theta}(s',a')]$ is equal to $y$ in \eqref{eq.approximate_theta}, it follows  $\mathcal{Z}^{\rm{target}}(\cdot|s,a)=\mathcal{N}(y,{\sigma^{\rm{target}}}^2)$. Considering the loss function in \eqref{eq.distributional_Bellman} under the KL divergence measurement, $Q_{\theta}(s,a)$ and $\sigma_{\psi}(s,a)$ are updated by minimizing
\begin{equation}
\label{eq.update_rule_theory}
\begin{aligned}
&D_{\rm{KL}}(\mathcal{Z}^{\rm{target}}(\cdot|s,a),\mathcal{Z}_{\theta,\psi}(\cdot|s,a))\\
&\quad=\log\frac{\sigma_{\psi}(s,a)}{\sigma^{\rm{target}}}+\frac{{\sigma^{\rm{target}}}^2+(y-Q_{\theta}(s,a))^2}{2{\sigma_{\psi}(s,a)}^2}-\frac{1}{2},
\end{aligned}
\end{equation}
that is,
\begin{equation}
\label{eq.phi_appro_DRL}
\begin{aligned}
&\theta_{\rm{new}}=\theta+\beta\frac{y-Q_{\theta}(s,a)}{{\sigma_{\psi}(s,a)}^2}\nabla_{\theta}Q_{\theta}(s,a),\\
&\psi_{\rm{new}}=\psi+\beta\frac{\Delta\sigma^2+(y-Q_{\theta}(s,a))^2}{{\sigma_{\psi}(s,a)}^3}
 \nabla_{\psi}\sigma_{\psi}(s,a). 
\end{aligned}
\end{equation}
where $\Delta\sigma^2 = {\sigma^{\rm{target}}}^2-{\sigma_{\psi}(s,a)}^2$. Compared with standard Q-learning, $\sigma_{\psi}(s,a)$ plays a role of adaptively adjusting the update stepsize of $Q_{\theta}(s,a)$. In particular, the update stepsize of $Q_{\theta}(s,a)$ decreases squarely as $\sigma_{\phi}(s,a)$ increases. Supposing $Q_{\theta}(s,a)$ also obeys \eqref{eq.approximate_error}, the post-update parameters obtained based on the true target value $\tilde{y}$ is given by
\begin{equation}
\label{eq.phi_true_DRL}
\theta_{\rm{true}}=\theta+\beta \frac{\tilde{y}-Q_{\theta}(s,a)}{{\sigma_{\psi}(s,a)}^2} \nabla_{\theta} Q_{\theta}(s,a)
\end{equation}

Similar to the derivation of $\Delta(s,a)$, the overestimation bias of $Q_{{\theta}_{\rm{new}}}(s,a)$ in distributional Q-learning is
\begin{equation}
\label{eq.distri_over_bias}
\Delta_D(s,a) \approx \frac{\beta \gamma \delta \|\nabla_{\theta} Q_{\theta}(s,a)\|_2^2 }{{\sigma_{\psi}(s,a)}^2}
=\frac{\Delta(s,a)}{{\sigma_{\psi}(s,a)}^2}.
\end{equation}
Obviously, the overestimation errors $\Delta_D(s,a)$ is inversely proportional to $\sigma_{\psi}(s,a)^2$. In an ideal situation, when $\tilde{Q}(s,a)=\tilde{y}$, that is, $\tilde{Q}(s,a)$ has converged after a period of learning, we can derive that 
\begin{equation}
\nonumber
\begin{aligned}
&\mathbb{E}_{\epsilon_Q,\epsilon_Q'}[\sigma_{\psi_{\rm{new}}}(s,a)]\ge\sigma_{\psi}(s,a)+\\
&\quad\beta\frac{{\sigma^{\rm{target}}}^2-{\sigma_{\psi}(s,a)}^2+\gamma^2\delta^2+\mathbb{E}_{\epsilon_Q}[{\epsilon_Q}^2]}{{\sigma_{\psi}(s,a)}^3}\|\nabla_{\psi}\sigma_{\psi}(s,a)\|_2^2,
\end{aligned}
\end{equation}
where this inequality holds approximately since we drop higher order terms out in Taylor approximation.
See Appendix \ref{appen.derivation_std} for details of derivation. 

Because $\sigma_{\psi_{\rm{new}}}$ is also the standard deviation for the next time step, this indicates that by repeatedly applying \eqref{eq.phi_appro_DRL},  the standard deviation $\sigma_{\psi}(s,a)$ of the return distribution tends to be a larger value in areas with high $\sigma^{\rm{target}}$ and random errors $\epsilon_Q$. Moreover, $\sigma^{\rm{target}}$ is often positively related to the randomness of systems $p$, reward function $R$ and the return distribution $\mathcal{Z}(\cdot|s',a')$ of subsequent state-action pairs. Since the overestimation bias $\Delta_D(s,a)$ is inversely proportional to ${\sigma_{\psi}(s,a)}^2$ according to \eqref{eq.distri_over_bias}, distributional Q-learning can be used to mitigate overestimations caused by task randomness and approximation errors.

\section{Distributional Soft Actor-Critic}
\label{sec:DSAC}
\textcolor{black}{In this section, based on the developed DSPI framework, we derive the learning rules of the continuous return distribution, and propose the DSAC algorithm by replacing the clipped double Q-learning of SAC \cite{Haarnoja2018SAC,Haarnoja2018ASAC} with the return distribution learning.} We will consider a parameterized distributional value function $\mathcal{Z}_{\theta}(\cdot|s,a)$ and a stochastic policy $\pi_{\phi}(\cdot|s)$, where $\theta$ and $\phi$ are parameters. In this paper, both the state-action return distribution and policy functions are modeled as Gaussian with mean and covariance given by neural networks (NNs).  We will next derive update rules for parameters of these NNs. 
\subsection{Algorithm}
\label{sec.algorithm}
\subsubsection{Distributional Soft Policy Evaluation}
Considering the loss function in \eqref{eq.distributional_Bellman}, the soft state-action return distribution can be trained to minimize the loss function in \eqref{eq.distributional_Bellman} under the KL-divergence measurement
\begin{equation}
\label{eq.obective_general}
J_{\mathcal{Z}}(\theta)=  \mathop{\mathbb{E}}_{(s,a)\sim \mathcal{B}}\big[D_{\rm{KL}}(\mathcal{T}^{\pi_{\phi'}}_{\mathcal{D}}\mathcal{Z}_{\theta'}(\cdot|s,a),\mathcal{Z}_{\theta}(\cdot|s,a))\big]
\end{equation}
where $\mathcal{B}$ is a replay buffer of previously sampled experience, $\theta'$ and $\phi'$ are parameters of target return distribution and policy functions, which are used to stabilize the learning process and evaluate the target. \textcolor{black}{For practical applications, $\sigma^{\rm{target}}$ in \eqref{eq.update_rule_theory} is unknown. Therefore, we cannot directly update $\mathcal{Z}_{\theta}(\cdot|s,a)$ using the objective shown in \eqref{eq.update_rule_theory}. After analysis, we get the following objective function equivalent to \eqref{eq.obective_general}} 
\begin{equation}
\nonumber
J_{\mathcal{Z}}(\theta)= -\Exp_{\substack{(s,a,r,s')\sim\mathcal{B},a'\sim \pi_{\phi'},\\Z(s',a')\sim\mathcal{Z}_{\theta'}(\cdot|s',a')}}\Big[\log\mathcal{P}(\mathcal{T}^{\pi_{\phi'}}_{\mathcal{D}}Z(s,a)|\mathcal{Z}_{\theta}(\cdot|s,a))\Big].
\end{equation}
We provide details of derivation in Appendix \ref{appen.derivation_object}. 

The parameters $\theta$ can be optimized with the following gradients
\begin{equation}
\nonumber
\nabla_{\theta}J_{\mathcal{Z}}(\theta)= -\Exp_{\substack{(s,a,r,s')\sim\mathcal{B},\\a'\sim \pi_{\phi'},\\Z(s',a')\sim\mathcal{Z}_{\theta'}}}\Big[\nabla_{\theta}\log\mathcal{P}(\mathcal{T}^{\pi_{\phi'}}_{\mathcal{D}}Z(s,a)|\mathcal{Z}_{\theta}(\cdot|s,a))\Big].
\end{equation}
Since $\mathcal{Z}_{\theta}$ is assumed to be a Gaussian model, it can be expressed as $\mathcal{Z}_{\theta}(\cdot|s,a)=\mathcal{N}(Q_{\theta}(s,a),\sigma_{\theta}(s,a)^2)$, where $Q_{\theta}(s,a)$ and $\sigma_{\theta}(s,a)$ are the outputs of value network. This makes the Gaussian variant of update gradients
\begin{equation}
\nonumber
\begin{aligned}
&\nabla_{\theta}J_{\mathcal{Z}}(\theta)\\
&= -\Exp_{\substack{(s,a,r,s')\sim\mathcal{B},\\a'\sim \pi_{\phi'},\\Z(s',a')\sim\mathcal{Z}_{\theta'}}}\Big[\nabla_{\theta}\log\Big(\frac{e^{-\frac{\big(\mathcal{T}^{\pi_{\phi'}}_{\mathcal{D}}Z(s,a)-Q_{\theta}(s,a)\big)^2}{2{\sigma_{\theta}(s,a)}^2}}}{\sqrt{2\pi}\sigma_{\theta}(s,a)}\Big)\Big]\\
&=\Exp_{\substack{(s,a,r,s')\sim\mathcal{B},\\a'\sim \pi_{\phi'},\\Z(s',a')\sim\mathcal{Z}_{\theta'}}}\Big[\nabla_{\theta}\frac{\big(\mathcal{T}^{\pi_{\phi'}}_{\mathcal{D}}Z(s,a)-Q_{\theta}(s,a)\big)^2}{2{\sigma_{\theta}(s,a)}^2}+\frac{\nabla_{\theta}\sigma_{\theta}(s,a)}{\sigma_{\theta}(s,a)}\Big].
\end{aligned}
\end{equation}

Denoting $\Psi_{\mathcal{Z}}(\theta)=\log\mathcal{P}(\mathcal{T}^{\pi_{\phi'}}_{\mathcal{D}}Z(s,a)|\mathcal{Z}_{\theta}(\cdot|s,a))$, to understand the composition of $\nabla_{\theta}J_{\mathcal{Z}}(\theta)$ more intuitively, we can rewrite it as
\begin{equation}
\label{eq:distribution_gradient}
\begin{aligned}
\nabla_{\theta}J_{\mathcal{Z}}(\theta)=\Exp_{\substack{(s,a,r,s')\sim\mathcal{B},\\a'\sim \pi_{\phi'},\\Z(s',a')\sim\mathcal{Z}_{\theta'}}}\Big[-&\frac{\partial\Psi_{\mathcal{Z}}(\theta)}{\partial Q_{\theta}(s,a)}\nabla_{\theta}Q_{\theta}(s,a)\\
&\qquad-\frac{\partial\Psi_{\mathcal{Z}}(\theta)}{\partial{\sigma}_{\theta}(s,a)}\nabla_{\theta}{\sigma}_{\theta}(s,a)\Big],
\end{aligned}
\end{equation}
where 
\begin{equation}
\nonumber
\begin{aligned}
&\frac{\partial\Psi_{\mathcal{Z}}(\theta)}{\partial Q_{\theta}(s,a)}=\frac{\big(\mathcal{T}^{\pi_{\phi'}}_{\mathcal{D}}Z(s,a)-Q_{\theta}(s,a)\big)}{{\sigma_{\theta}(s,a)}^2},
\\
&\frac{\partial\Psi_{\mathcal{Z}}(\theta)}{\partial{\sigma}_{\theta}(s,a)}=\frac{\big(\mathcal{T}^{\pi_{\phi'}}_{\mathcal{D}}Z(s,a)-Q_{\theta}(s,a)\big)^2}{{\sigma_{\theta}(s,a)}^3}-\frac{1}{\sigma_{\theta}(s,a)}.
\end{aligned}
\end{equation}
It can be easily deduced from $\frac{\partial\Psi_{\mathcal{Z}}(\theta)}{\partial Q_{\theta}(s,a)}$ that the update stepsize of $Q_{\theta}(s,a)$ decreases squarely as $\sigma_{\theta}(s,a)$ increases, thereby mitigating Q-value overestimations. However, the gradients $\nabla_{\theta}J_{\mathcal{Z}}(\theta)$ are prone to explode as $\sigma_{\theta}(s,a) \rightarrow 0$, or to vanish as $\sigma_{\theta}(s,a) \rightarrow \infty$. To address this problem, we propose two options to keep $\sigma_{\theta}(s,a)$ within a reasonable range. The first point is to limit the minimum value of $\sigma_{\theta}(s,a)$ by
\begin{equation}
\label{eq.sigma_min}
\sigma_{\theta}(s,a)=\max(\sigma_{\theta}(s,a),\sigma_{{\rm{min}}}),
\end{equation}
Noted that if $\sigma_{{\rm{min}}}\ge1$, we always have $\Delta_D(s,a)\le \Delta(s,a)$. Therefore, in this paper, we let $\sigma_{{\rm{min}}}=1$. And the second point is to clip $\mathcal{T}^{\pi_{\phi'}}_{\mathcal{D}}Z(s,a)$ of $\frac{\partial\Psi_{\mathcal{Z}}(\theta)}{\partial{\sigma}_{\theta}(s,a)}$ to keep it close to the expectation value $Q_{\theta}(s,a)$ of the current soft return distribution, thus stabilizing the learning process of $\sigma_{\theta}(s,a)$ and indirectly controlling its range, i.e.,
\begin{equation}
\nonumber
\frac{\partial\Psi_{\mathcal{Z}}(\theta)}{\partial{\sigma}_{\theta}(s,a)}=\frac{\big(\overline{\mathcal{T}^{\pi_{\phi'}}_{\mathcal{D}}Z(s,a)}-Q_{\theta}(s,a)\big)^2}{{\sigma_{\theta}(s,a)}^3}-\frac{1}{\sigma_{\theta}(s,a)},
\end{equation}
where
\begin{equation}
\label{eq:clipping_bound}
\overline{\mathcal{T}^{\pi_{\phi'}}_{\mathcal{D}}Z(s,a)}={\rm{clip}}(\mathcal{T}^{\pi_{\phi'}}_{\mathcal{D}}Z(s,a),Q_{\theta}(s,a)-b,Q_{\theta}(s,a)+b),
\end{equation}
where $\text{clip}[x,A,B]$ denotes that $x$ is clipped into the range $[A,B]$ and $b$ is the clipping boundary.

The target networks mentioned above use a slow-moving update rate, parameterized by $\tau$, such as 
\begin{equation}
\nonumber
\theta' \leftarrow  \tau\theta+(1-\tau)\theta', \quad
\phi' \leftarrow  \tau\phi+(1-\tau)\phi'.
\end{equation}

\subsubsection{Distributional Soft Policy Improvement}
The policy can be learned by directly maximizing a parameterized variant of the objective in \eqref{eq.policy_imp}:
\begin{equation}
\nonumber
\begin{aligned}
&J_{\pi}(\phi)=\Exp_{s\sim\mathcal{B},a\sim\pi_{\phi}}[Q_{\theta}(s,a)-\alpha\log(\pi_{\phi}(a|s))]
\\
&=\Exp_{\substack{s\sim \mathcal{B},a\sim\pi_{\phi}}}\Big[\Exp_{Z(s,a)\sim\mathcal{Z}_{\theta}(\cdot|s,a)}[Z(s,a)]-\alpha\log(\pi_{\phi}(a|s))\Big].
\end{aligned}
\end{equation}
If $a$ is unbounded, given the parameters of the action distribution, such as the mean and variance of the Gaussian distribution, $\log(\pi_{\phi}(a|s))$ can be easily calculated. On the other hand, if $a$ is bounded to a finite interval, its log-likelihood can also be obtained in the manner given in Appendix \ref{appen.pdf_log_pi}.

There are several options, such as log derivative and reparameterization tricks, for maximizing $J_{\pi}(\phi)$ \cite{kingma2013repa}. In this paper, we apply the reparameterization trick because it can reduce the gradient estimation variance. 

If the soft Q-value function $Q_{\theta}(s,a)$ is explicitly parameterized through parameters $\theta$, we only need to express the random action $a$ as a deterministic variable, i.e.,
\begin{equation}
\label{eq.reparameter_a}
a=f_{\phi}(\xi_a;s),
\end{equation}
where $\xi_a \in \mathbb{R}^{{\rm{dim}}(\mathcal{A})}$ is an auxiliary variable which is sampled form some fixed distribution. In particular, since $\pi_{\phi}(\cdot|s)$ is assumed to be a Gaussian in this paper, $f_{\phi}(\xi_a;s)$ can be formulated as 
\begin{equation}
\nonumber
f_{\phi}(\xi_a;s)=a_{\rm{mean}}+\xi_a \odot a_{\rm{std}},
\end{equation}
where $a_{\rm{mean}}\in \mathbb{R}^{{\rm{dim}}(\mathcal{A})}$ and $a_{\rm{std}}\in \mathbb{R}^{{\rm{dim}}(\mathcal{A})}$ are the mean and standard deviation of $\pi_{\phi}(\cdot|s)$, $\odot$ represents the Hadamard product and ${\xi_a}_i \sim \mathcal{N}(0,\bf{I}_{{\rm{dim}}(\mathcal{A})})$.
Then the policy update gradients can be approximated with 
\begin{equation}
\nonumber
\begin{aligned}
\nabla_{\phi}J_{\pi}(\phi)&=\mathbb{E}_{s\sim \mathcal{B},\xi_a}\Big[-\alpha\nabla_{\phi}\log(\pi_{\phi}(a|s))+
\\&(\nabla_aQ_{\theta}(s,a)-\alpha\nabla_a\log(\pi_{\phi}(a|s)))\nabla_{\phi}f_{\phi}(\xi_a;s)\Big].
\end{aligned}
\end{equation}
If $Q_{\theta}(s,a)$ cannot be expressed explicitly through $\theta$, the policy update gradients can be obtained in the manner given in Appendix \ref{appen.policy_update}.

\subsubsection{Pseudo-code}
Finally, according to \cite{Haarnoja2018ASAC}, the temperature $\alpha$ is updated by minimizing the following objective
\begin{equation}
\nonumber
J(\alpha)=\mathbb{E}_{(s,a)\sim\mathcal{B}}[\alpha(- \log\pi_{\phi}(a|s)-\overline{\mathcal{H}})],
\end{equation}
where $\overline{\mathcal{H}}$ is the expected entropy. In addition, two-timescale updates, i.e., less frequent policy updates, usually result in higher quality policy updates \cite{Fujimoto2018TD3}. Therefore, 
the policy, temperature and target networks are updated every $m$ iterations in this paper. The final algorithm is listed in Algorithm \ref{alg:DSAC}. Fig. \ref{f:diagram} shows the diagram of DSAC. 
\begin{algorithm}[!htb]
\caption{DSAC Algorithm}
\label{alg:DSAC}
\begin{algorithmic}
\STATE Initialize parameters $\theta$, $\phi$ and $\alpha$
\STATE Initialize target parameters $\theta'\leftarrow\theta$, $\phi'\leftarrow\phi$
\STATE Initialize learning rate $\beta_{\mathcal{Z}}$, $\beta_{\pi}$, $\beta_{\alpha}$ and $\tau$ 
\STATE Initialize iteration index $k=0$
\REPEAT
\STATE Select action $a\sim\pi_{\phi}(a|s)$
\STATE Observe reward $r$ and new state $s'$
\STATE Store transition tuple $(s,a,r,s')$ in buffer $\mathcal{B}$
\STATE
\STATE Sample $N$ transitions $(s,a,r,s')$ from $\mathcal{B}$
\STATE Update soft return distribution $\theta \leftarrow \theta - \beta_{\mathcal{Z}}\nabla_{\theta}J_{\mathcal{Z}}(\theta)$
\IF{$k$ mod $m$}
\STATE Update policy $\phi \leftarrow \phi + \beta_{\pi}\nabla_{\phi} J_{\pi}(\phi)$
\STATE Adjust temperature $\alpha \leftarrow \alpha - \beta_{\alpha}\nabla_{\alpha} J(\alpha)$
\STATE Update target networks:
\STATE \qquad $\theta' \leftarrow  \tau\theta+(1-\tau)\theta'$, $\phi' \leftarrow  \tau\phi+(1-\tau)\phi'$
\ENDIF
\STATE $k=k+1$
\UNTIL Convergence  
\end{algorithmic}
\end{algorithm}

\begin{figure}[!htb]
\captionsetup{singlelinecheck = false,labelsep=period, font=small}
\centering{\includegraphics[width=0.45\textwidth]{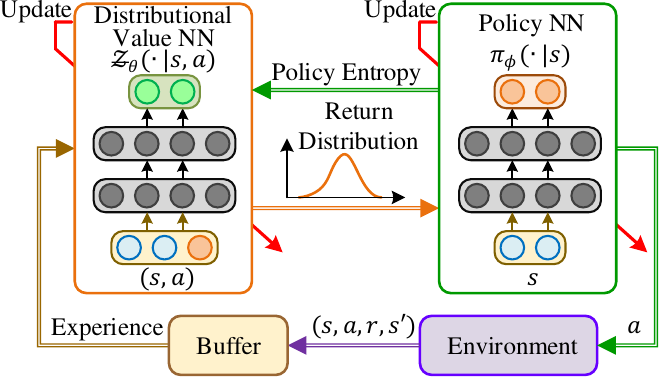}}
\caption{\textcolor{black}{DSAC diagram. The return distribution and policy are approximated by two NNs, called distributional value network and policy network respectively. DSAC first updates the distributional value network based on the samples collected from the buffer. Then, the output of the value network is used to guide the update of the policy network.}}
\label{f:diagram}
\end{figure}

\subsection{Architecture} \label{sec.architecture}
\textcolor{black}{Algorithm \ref{alg:DSAC} and Fig. \ref{f:diagram} show the operation process of DSAC in a serial way. Like most off-policy RL algorithms, we can use parallel or distributed learning techniques to improve the learning efficiency of DSAC. Therefore, we build a new parallel asynchronous buffer-actor-learner architecture (PABAL) referring to the other high-throughput learning architectures, such as IMPALA and Ape-X \cite{Espeholt2018IMPALA,horgan2018Ape-X,mnih2016A3C}.} As shown in Fig. \ref{f:architeture}, buffers, actors and learners are all distributed across multiple workers, which are used to improve the efficiency of storage and sampling, exploration, and updating, respectively. And all communication between modules is asynchronous.

Both actors and learners asynchronously synchronize the parameters from the shared memory. The experience generated by each actor is asynchronously and randomly sent to a certain buffer at each time step. Each buffer continuously stores data and sends the sampled experience to a random learner. Relying on the received sampled data, the learners calculate the update gradients using their local functions, and then use these gradients to update the shared value and policy functions. In this paper, we implement DSAC and other off-policy baseline algorithms within the PABAL architecture.  

\begin{figure}[!htb]
\captionsetup{singlelinecheck = false,labelsep=period, font=small}
\centering{\includegraphics[width=0.48\textwidth]{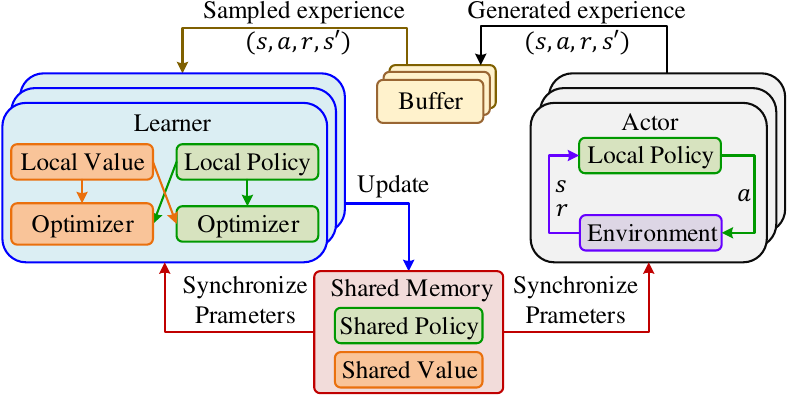}}
\caption{The PABAL architecture. Buffers, actors, and learners are all distributed across multiple workers. Communication between different modules is asynchronous.}
\label{f:architeture}
\end{figure}

\section{Experimental Verification}
\label{sec:experiments}

\subsection{Benchmarks}
To evaluate our algorithm, we measure its performance and Q-value estimation bias on a suite of MuJoCo continuous control tasks without modifications to environment \cite{todorov2012mujoco}, interfaced through OpenAI Gym \cite{brockman2016openaigym}. Fig. \ref{fig.env} shows the benchmark tasks used in this paper. \textcolor{black}{See Appendix \ref{appen:benchmarks}
for brief descriptions of these benchmarks.}
\begin{figure}[!htb]
\centering
\captionsetup[subfigure]{justification=centering}
\subfloat[]{\label{subfig:Human_env}\includegraphics[width=0.15\textwidth]{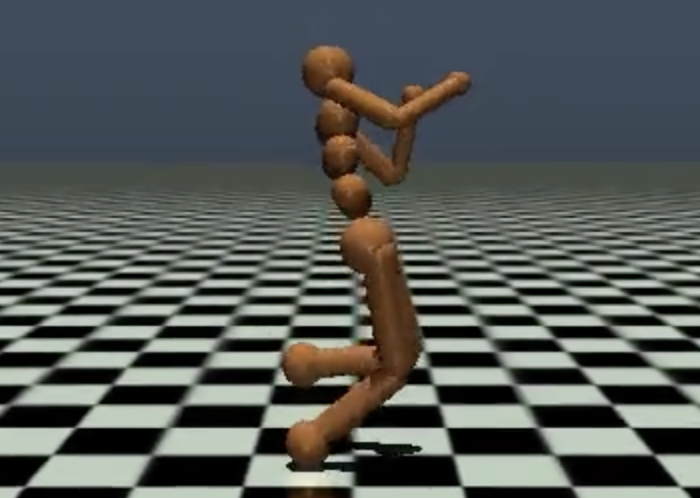}} 
\subfloat[]{\label{subfig:Halfcheetah_env}\includegraphics[width=0.15\textwidth]{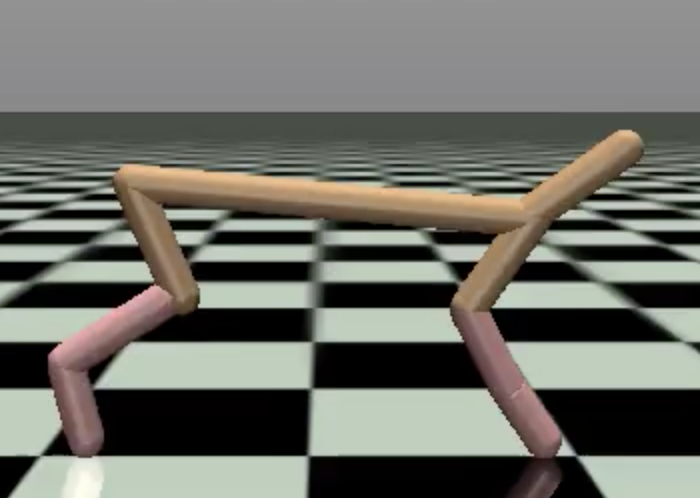}} 
\subfloat[]{\label{subfig:Ant_env}\includegraphics[width=0.15\textwidth]{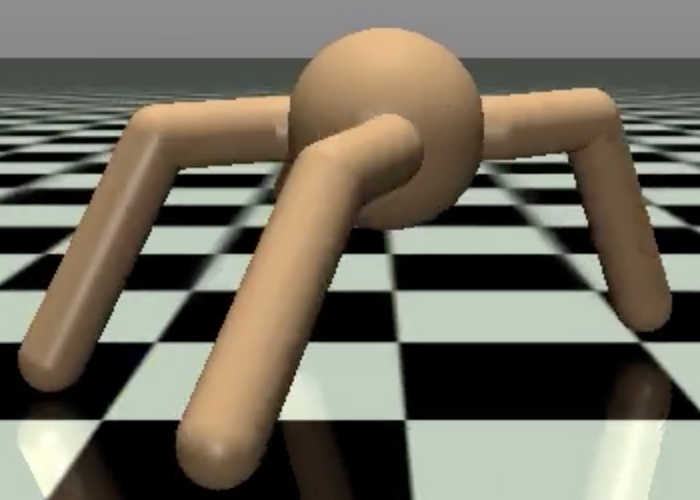}} \\
\subfloat[]{\label{subfig:Walker_env}\includegraphics[width=0.15\textwidth]{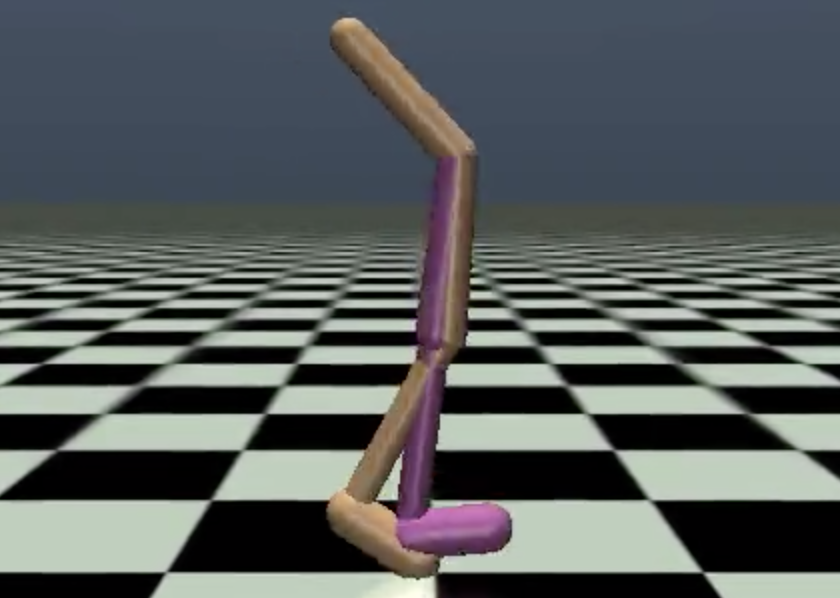}} 
\subfloat[]{\label{subfig:Invert_env}\includegraphics[width=0.15\textwidth]{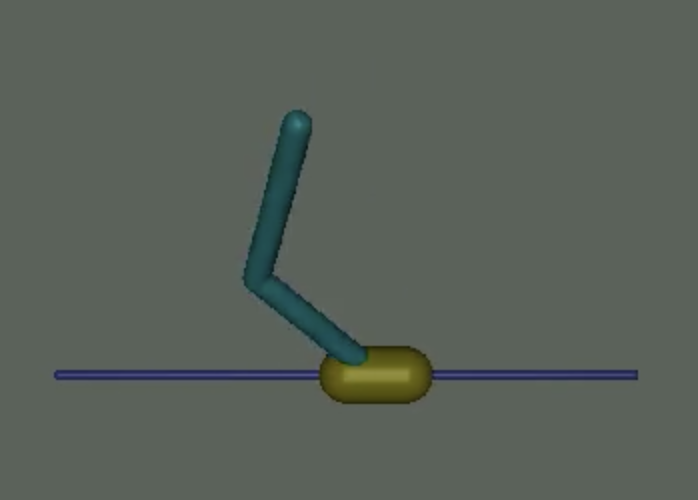}} 
\caption{Tasks. (a) Humanoid-v2: $(s\times a)\in \mathbb{R}^{376} \times  \mathbb{R}^{17}$. (b) HalfCheetah-v2: $(s\times a)\in \mathbb{R}^{17} \times \mathbb{R}^{6}$. (c) Ant-v2: $(s\times a)\in  \mathbb{R}^{111}\times \mathbb{R}^{8}$. (d) Walker2d-v2: $(s\times a)\in \mathbb{R}^{17} \times \mathbb{R}^{6}$. (e) InvertedDoublePendulum-v2: $(s\times a)\in \mathbb{R}^{11}\times\mathbb{R}^{1}$.}
\label{fig.env}
\end{figure}

\subsection{Baselines}
\textcolor{black}{We compare our algorithm against Deep Deterministic Policy Gradient (DDPG) \cite{lillicrap2015DDPG}, Trust Region Policy Optimization (TRPO) \cite{schulman2015TRPO}, Proximal Policy Optimization (PPO) \cite{schulman2017PPO}, Distributed Distributional Deep Deterministic Policy Gradients (D4PG) \cite{barth-maron2018D4PG}, Twin Delayed Deep Deterministic policy gradient (TD3) \cite{Fujimoto2018TD3}, Soft Actor-Critic (SAC) \cite{Haarnoja2018ASAC}.  DDPG, TRPO, PPO, D4PG, TD3 and SAC are mainstream RL algorithms, which have been extensively verified and applied in a variety of challenging domains. Using these algorithms as baselines, the performance of the proposed DSAC algorithm can be evaluated objectively.} 

\textcolor{black}{We additionally compare our method with our proposed Twin Delayed Distributional Deep Deterministic policy gradient algorithm (TD4), which is developed by replacing the clipped double Q-learning in TD3 with the distributional return learning; Double Q-learning variant of SAC (Double-Q SAC), in which we replace the clipped double Q-learning of SAC with the actor-critic variant of double Q-learning \cite{hasselt2010double_Q,Fujimoto2018TD3}; and single Q-value variant of SAC (Single-Q SAC), in which we replace the clipped double Q-learning of SAC with traditional TD learning.  See Appendix \ref{appen:Double-Q SAC Algorithm}, \ref{appen:single-Q SAC} and \ref{appen:TD4} for detailed descriptions of Double-Q SAC, Single-Q SAC and TD4 algorithms. Double-Q SAC and Single-Q SAC are adapted from SAC. Table \ref{table.baselines} gives a basic description of DSAC and each baseline. It is clear that DSAC, SAC, Double-Q SAC and Single-Q SAC algorithms respectively use the return distribution learning, clipped double Q-learning, double Q-learning and traditional TD learning for policy evaluation.  This is the only difference between these algorithms. Therefore, we can assess the impact of the distribution learning by comparing DSAC with SAC, Single-Q SAC and Double-Q SAC. Besides, we compare DSAC with TD4, which uses the distribution learning but not maximum entropy, to assess the impact of policy entropy.}

\begin{table*}[!htb]
\centering
\captionsetup{justification=centering,labelsep=newline,font={small,sc}}
\caption{\textcolor{black}{Basic description of DSAC and baselines.}}
\label{table.baselines}
{\color{black}{
\begin{tabular}{lllll}
\toprule
Algorithm& Algorithm Type & Policy Type & Policy Evaluation & Policy Improvement \\
\hline
DSAC (Ours) & off-policy & Stochastic & Continuous soft return distribution learning & Soft policy gradient \\
SAC\cite{Haarnoja2018ASAC} & off-policy &  Stochastic & Clipped double Q-learning & Soft policy gradient \\
Double-Q SAC &off-policy &  Stochastic &Double Q-learning & Soft policy gradient\\
Single-Q SAC &off-policy&  Stochastic & Traditional TD learning &  Soft policy gradient\\
TD4  &off-policy& Deterministic& Continuous return distribution learning &  Policy gradient \\
TD3\cite{Fujimoto2018TD3}  &off-policy& Deterministic& Clipped double Q-learning &  Policy gradient \\
DDPG\cite{lillicrap2015DDPG}  &off-policy& Deterministic& Traditional TD learning &  Policy gradient \\
D4PG\cite{barth-maron2018D4PG}  &off-policy& Deterministic& Discrete return distribution learning &  Policy gradient \\
TRPO\cite{schulman2015TRPO}  & on-policy & Stochastic& Traditional TD learning &  Constrained Policy Optimization \\
PPO\cite{schulman2017PPO}  &on-policy & Stochastic& Traditional TD learning &  Proximal Policy Optimization \\
\bottomrule
\end{tabular}}}
\end{table*}

All the off-policy algorithms mentioned above are implemented in the proposed PABAL architecture, including 4 learners, 6 actors and 3 buffers. We use a fully connected network with 5 hidden layers, consisting of 256 units per layer, with Gaussian Error Linear Units (GELU) each layer \cite{hendrycks2016gelu}, for both actor and critic. For distributional value function and stochastic policy, we use a Gaussian distribution with mean and covariance given by a NN, where the covariance matrix is diagonal.  In this case, each NN maps the input states to the mean and logarithm of standard deviation of the Gaussian distribution. The Adam method \cite{Diederik2015Adam} with a cosine annealing learning rate is used to update all the parameters. All algorithms adopt almost the same NN architecture and hyperparameters. Table \ref{table.hyper} in Appendix \ref{appen.hyper} provides more detailed hyperparameters of all algorithms. 

\subsection{Results}
\subsubsection{Performance}
\begin{figure*}[!htb]
\centering
\captionsetup[subfigure]{justification=centering}
\subfloat[Humanoid-v2]{\label{fig:Humanoid_ave}\includegraphics[width=0.33\textwidth]{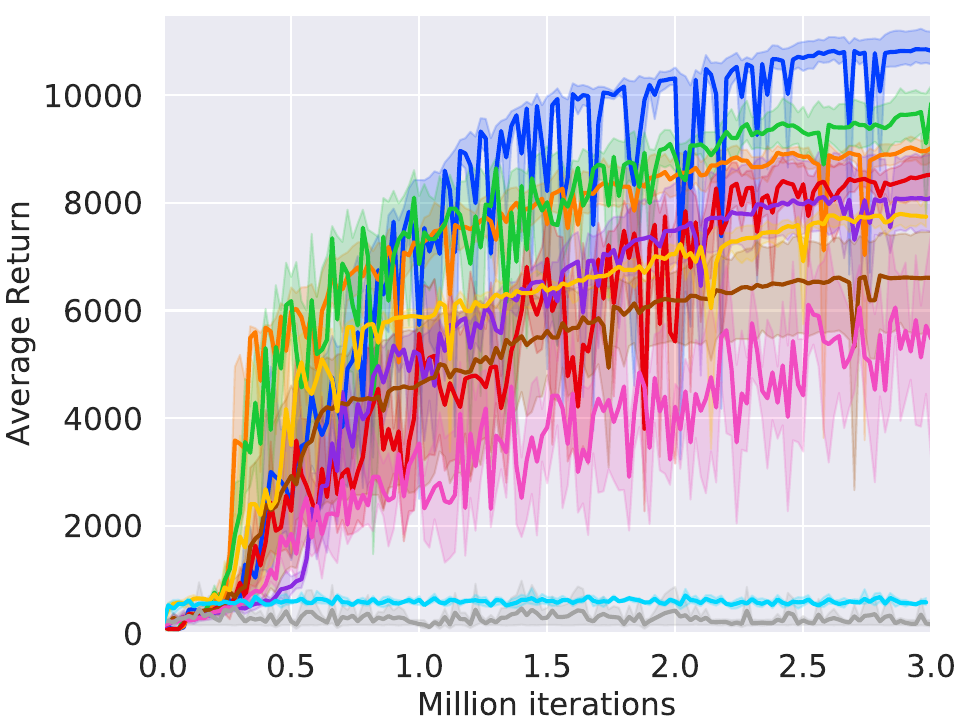}} 
\subfloat[Ant-v2]{\label{fig:Ant_ave}\includegraphics[width=0.33\textwidth]{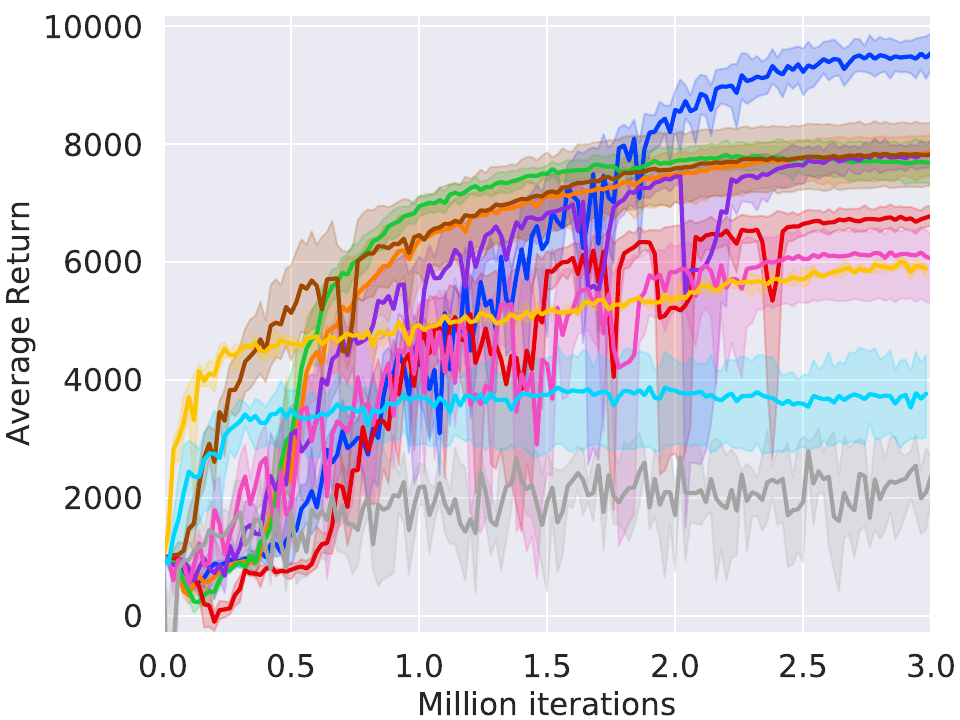}}
\subfloat[Walker2d-v2]{\label{fig:Walker2d_ave}\includegraphics[width=0.33\textwidth]{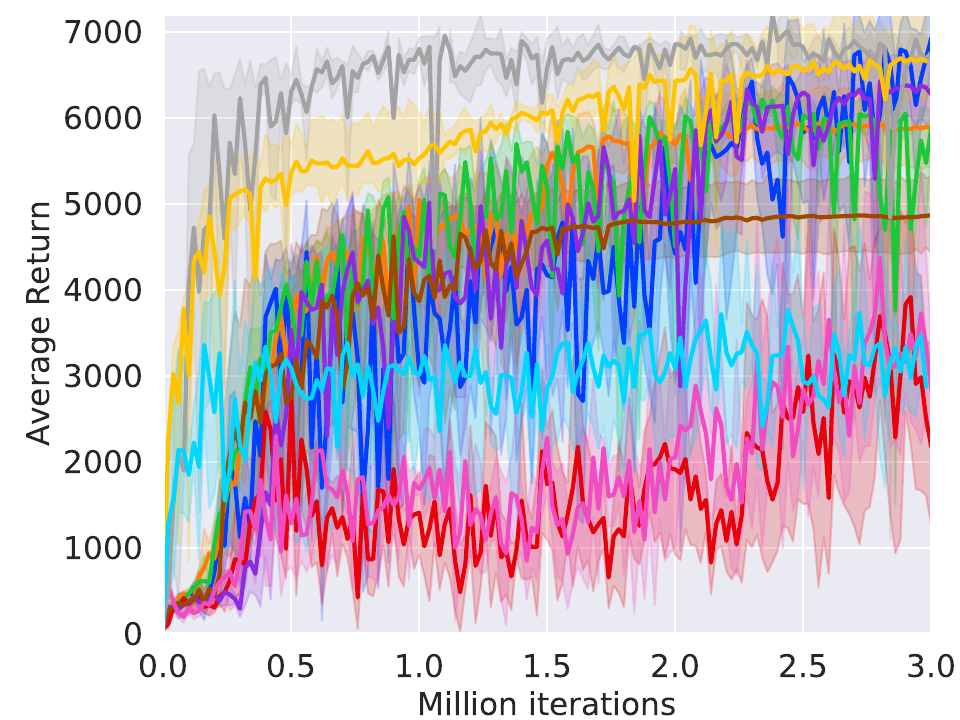}} \\
\subfloat[HalfCheetah-v2]{\label{fig:HalfCheetah_ave}\includegraphics[width=0.33\textwidth]{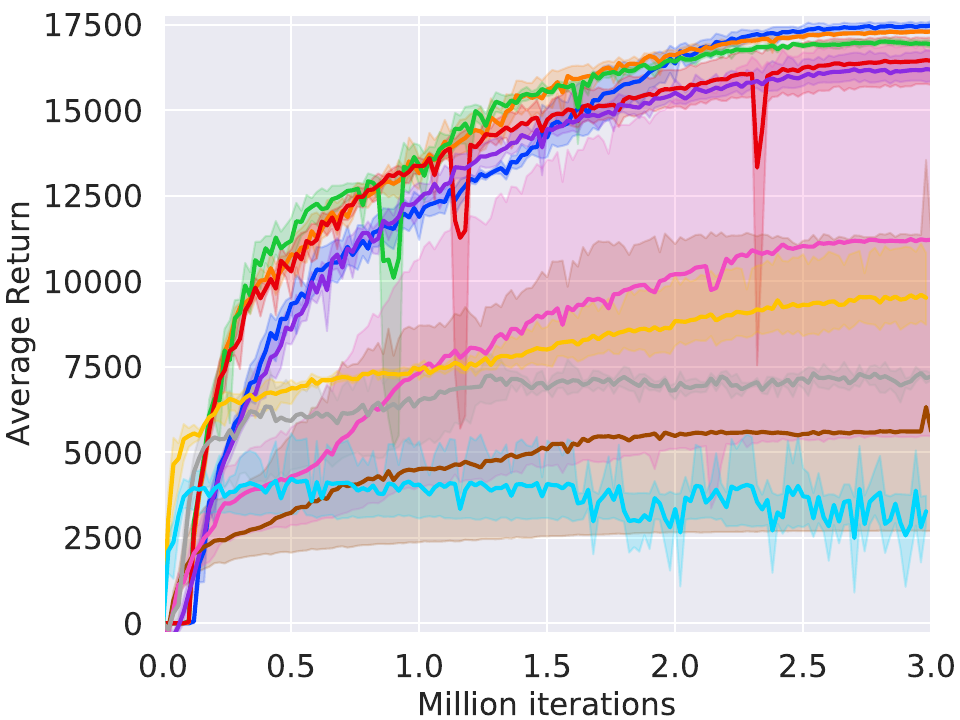}} 
\subfloat[InvertedDoublePendulum-v2]{\label{fig:InvertedDoublePendulum_ave}\includegraphics[width=0.33\textwidth]{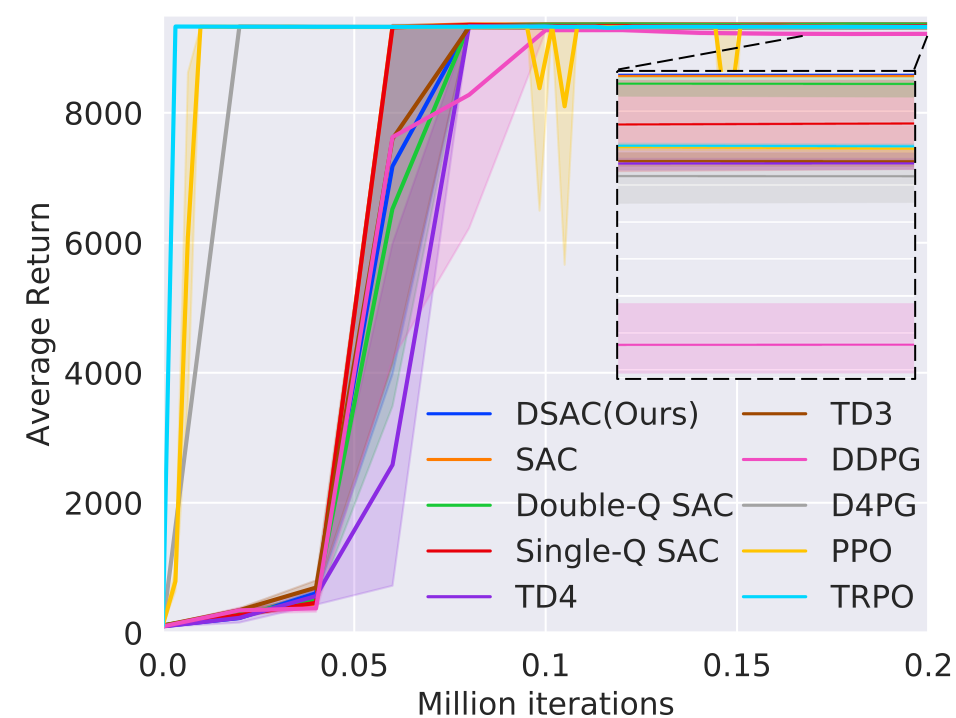}} 
\caption{\textcolor{black}{Training curves on continuous control benchmarks. The solid lines correspond to the mean and the shaded regions correspond to 95\% confidence interval over 5 runs.}}
\label{fig.average_return}
\end{figure*}
We train 5 different runs of each algorithm with different random seeds, with evaluations every 20000 iterations. Each evaluation calculates the average return over 5 episodes without exploration noise, where the maximum length of each episode is 1000 time steps.  
The learning curves are shown in Fig. \ref{fig.average_return} and results in Table \ref{table.ave_return}. \textcolor{black}{Results show that the proposed DSAC algorithm outperforms or matches all other baseline algorithms across all benchmark tasks in terms of the final performance. For example, compared with famous RL algorithms such as SAC, TD3, PPO, and DDPG, DSAC gains 20.0\%, 63.8\%, 39.8\%, 97.6\% improvements on the most complex Humanoid-v2 task, respectively. This indicates that the final performance of DSAC on these benchmarks exceeds the state of the art. Fig. \ref{fig.dsac_vs_sac} visually shows the control performance of DSAC and SAC on Humanoid-v2. It is obvious that DSAC realizes a movement closer to human running. Among DSAC, SAC, Single-Q SAC and Double-Q SAC, DSAC has achieved the best performance on all tasks, which shows that the return distribution learning is an important measure to improve policy performance. Besides, TD4 also outperforms TD3 and DDPG on most tasks, which shows that algorithms with deterministic policies also benefit greatly from the return distribution learning. As TD4 exceeds the performance of D4PG, which learns a discrete return distribution, with a wide margin on Humanoid-v2, Ant-v2 and HalfCheetah-v2, this indicates that learning a continuous distribution causes significant performance improvements in most cases. Compared with TD4, DSAC achieves 33.8\%, 22.1\%, 10.4\%, 8.0\% improvements on Humanoid-v2, Ant-v2, Walker2d-v2, and HalfCheetah-v2, respectively, suggesting that the maximum entropy framework is an effective measure to achieve good performance. 
}

\begin{table*}[!htb]
\centering
\captionsetup{justification=centering,labelsep=newline,font=small}
\captionsetup{justification=centering,labelsep=newline,font={small,sc}}
\caption{\textcolor{black}{Average final return. Maximum value for each task is bolded. $\pm$ corresponds to a single standard deviation over 5 runs.}}
\label{table.ave_return}
\textcolor{black}{
\begin{tabular}{cccccc}
\toprule
Task& Humanoid-v2   &  Ant-v2& Walker2d-v2  &HalfCheetah-v2 &  InvDoublePendulum-v2\\
\hline
DSAC (Ours) & \textbf{10824}$\pm$\textbf{347} & \textbf{9547}$\pm$\textbf{346}  & \textbf{6920}$\pm$\textbf{405} & \textbf{17479}$\pm$\textbf{148} & \textbf{9359.7}$\pm$\textbf{0.2} \\
SAC          & 9019$\pm$292         & 7856$\pm$416                      & 5878$\pm$580        & 17300$\pm$39               & 9359.6 $\pm$0.2\\
Double-Q SAC & 9844$\pm$396          & 7682$\pm$428                    & 5881$\pm$227        & 16926$\pm$132             & 9359.4$\pm$0.6\\
Single-Q SAC & 8525$\pm$488         & 6783$\pm$197                      & 2176$\pm$1251       & 16445$\pm$815              & 9355.2$\pm$3.6\\
TD4          & 8090$\pm$789         & 7821$\pm$262                  & 6270$\pm$435      & 16187$\pm$538          & 9320.2$\pm$18.3\\
TD3          & 6610$\pm$1062       &7828$\pm$642                     & 4864$\pm$512        & 5619$\pm$5779          & 9315.5$\pm$10.4\\
DDPG         & 5477$\pm$2438       & 6060$\pm$747                  & 2849$\pm$690          & 11214$\pm$6861       & 9198.0$\pm$13.1\\
D4PG          & 175$\pm$53         & 2367$\pm$303                  & 6588$\pm$260      & 7215$\pm$89          & 9300.9$\pm$16.3\\
PPO         & 7743$\pm$267       &5889$\pm$111                     & 6654$\pm$492        & 9517$\pm$936          & 9318.7$\pm$0.7\\
TRPO         & 581$\pm$56       & 3767$\pm$573                 & 2870$\pm$28          & 3274$\pm$346       & 9324.6$\pm$2.8\\
\bottomrule
\end{tabular}}
\end{table*}

\begin{figure*}[!htb]
\centering
\captionsetup[subfigure]{justification=centering}
\subfloat[DSAC (Ours)]{\label{fig:dsac_human}\includegraphics[width=0.65\textwidth]{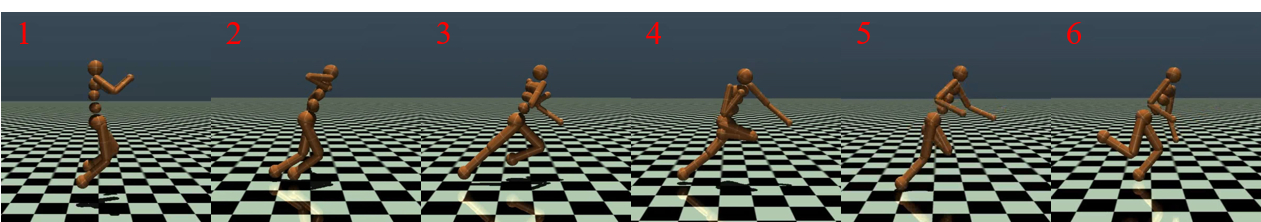}} \\
\subfloat[SAC]{\label{fig:sac_human}\includegraphics[width=0.65\textwidth]{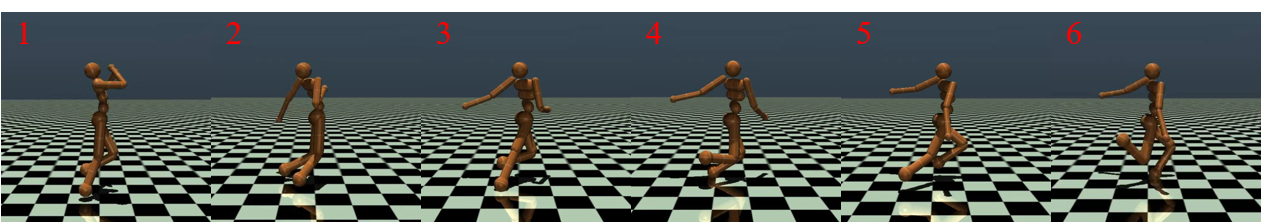}}
\caption{\textcolor{black}{DSAC vs SAC on Humanoid-v2.}}
\label{fig.dsac_vs_sac}
\end{figure*}

\subsubsection{Q-value Estimation Accuracy}

\textcolor{black}{To evaluate the impact of the return distribution learning on Q-value estimation accuracy, this section compares the estimation bias of DSAC, SAC, Double-Q SAC and Single-Q SAC on different benchmarks. The Q-value estimation bias is equal to the difference between the Q-value estimate and the true Q-value. To approximate the true Q-value, we calculate the average actual discounted return over states of 10 episodes every 20000 iterations (evaluate up to the first 200 states per episode). Fig. \ref{fig.average_bia} graphs the average Q-value estimation and true Q-value curves during learning. Table \ref{table.bias} gives the average relative Q-value estimation bias which equals the Q-value estimation bias divided by the true Q-value. Noted that this part excludes the InvDoublePendulum-v2 task, because due to its simplicity, a good policy has been learned before the value function converges.}

\begin{figure*}[!htb]
\centering
\captionsetup[subfigure]{justification=centering}
\subfloat[Humanoid-v2]{\label{fig:Humanoid_bias}\includegraphics[width=0.25\textwidth]{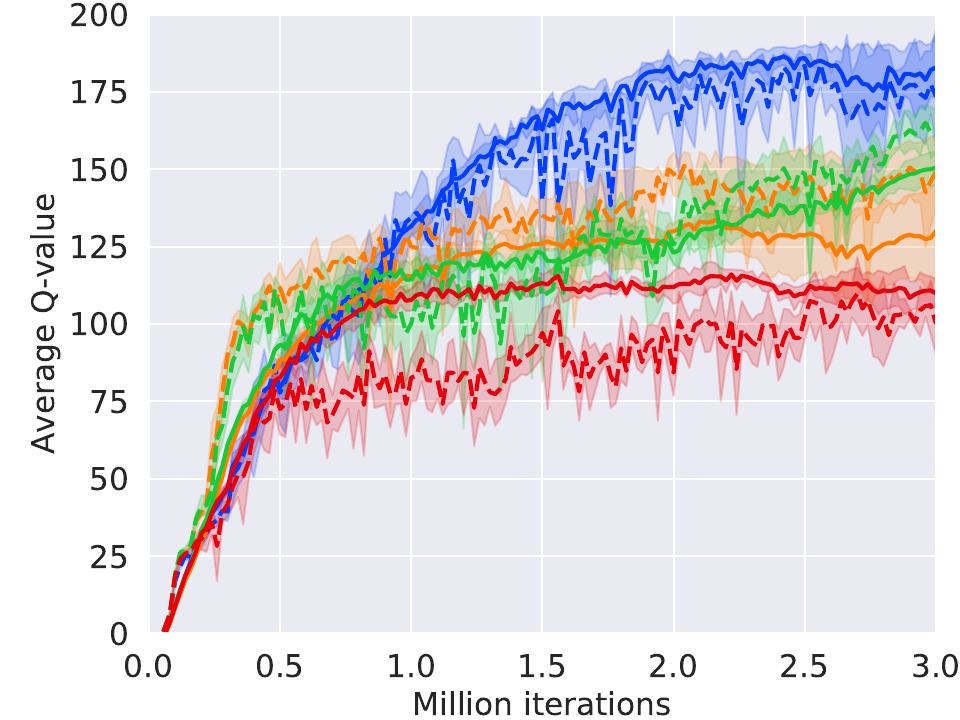}} 
\subfloat[Ant-v2]{\label{fig:Ant_bias}\includegraphics[width=0.25\textwidth]{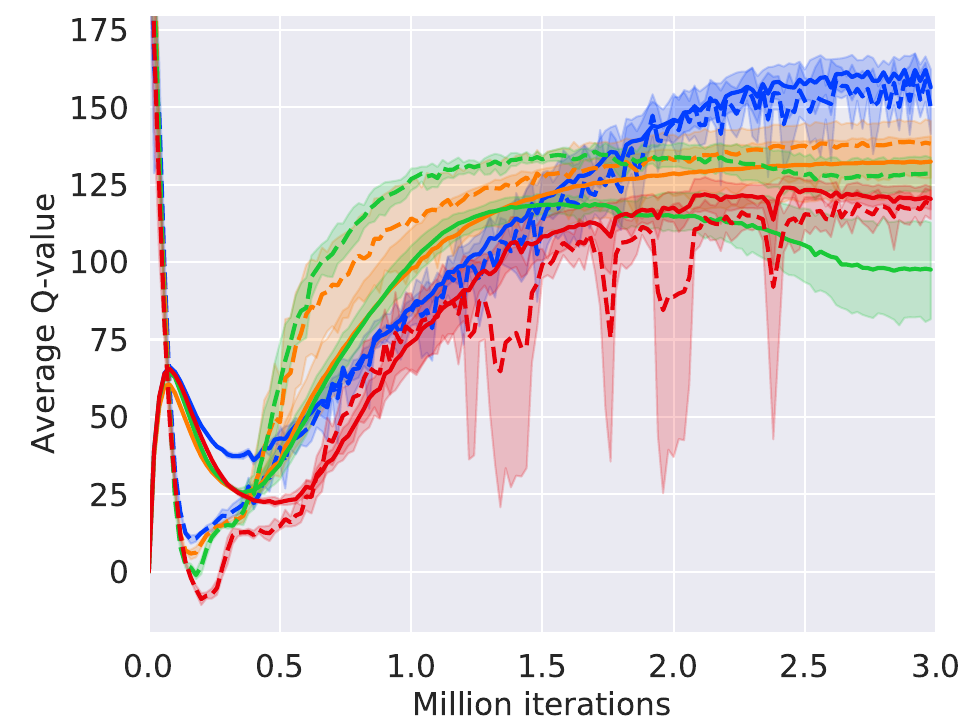}} 
\subfloat[Walker2d-v2]{\label{fig:Walker2d_bias}\includegraphics[width=0.25\textwidth]{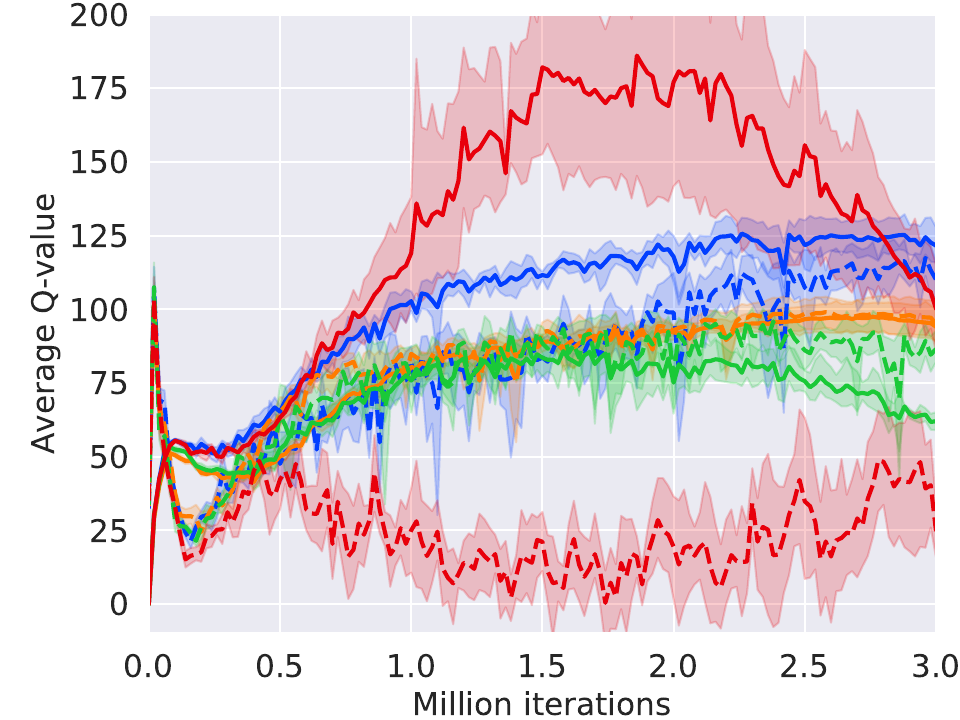}} 
\subfloat[HalfCheetah-v2]{\label{fig:HalfCheetah_bias}\includegraphics[width=0.25\textwidth]{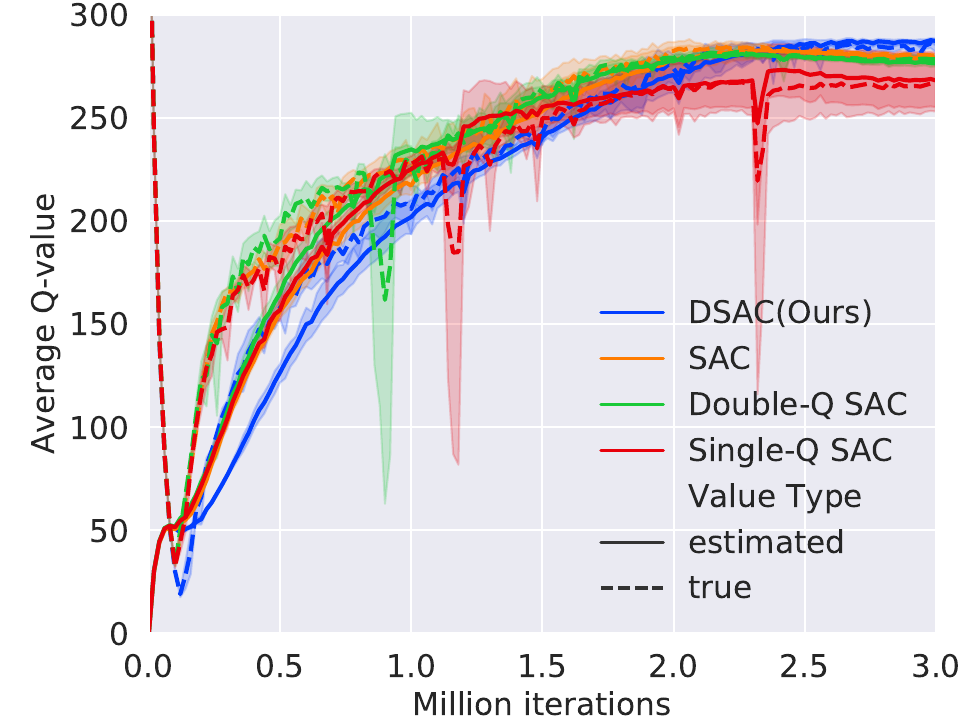}} 
\caption{\textcolor{black}{Average true Q-value vs estimated Q-value. The solid lines correspond to the mean and the shaded regions correspond to 95\% confidence interval over 5 runs.}}
\label{fig.average_bia}
\end{figure*}

\begin{table*}[!htb]
\centering
\captionsetup{justification=centering,labelsep=newline,font=small}
\captionsetup{justification=centering,labelsep=newline,font={small,sc}}
\caption{\textcolor{black}{Average relative Q-value estimation bias over 5 runs. We average the relative estimation bias from 1.5 million to 3 million iterations for each run. $+$ and $-$ indicate overestimation and underestimation, respectively. $\pm$ corresponds to a single standard deviation over 5 runs.}}
\label{table.bias}
{\color{black}{
\begin{tabular}{cccccc}
\toprule
Algorithm &Main difference & Humanoid-v2 &  Ant-v2& Walker2d-v2 &HalfCheetah-v2   \\
\hline
DSAC (Ours) &Continuous return distribution learning& +5.32\%$\pm$0.62\%& +3.48\%$\pm$0.69\% &  +17.71\%$\pm$2.30\% & -0.33\%$\pm$0.18\% \\
Single-Q SAC &Traditional TD learning&  +15.85\%$\pm$1.06\%& +9.24\%$\pm$5.74\%  &+943.80\%$\pm$683.94\% &+1.56\%$\pm$1.67\% \\
SAC &Clipped double Q-learning& -10.16\%$\pm$1.37\% & -4.07\%$\pm$0.66\% & -1.45\%$\pm$1.06\%  & -0.99\%$\pm$0.66\%\\
Double-Q SAC & Double Q-learning &  -4.63\%$\pm$1.70\%  & -16.68\%$\pm$4.21\%    & -12.84\%$\pm$4.03\%         &-0.33\%$\pm$0.32\%\\
\bottomrule
\end{tabular}}}
\end{table*}

\textcolor{black}{Compared with Single-Q SAC that updates Q-value using the traditional TD learning method, the overestimation bias of DSAC is reduced by 10.53\%, 5.76\%, 926.09\%, 1.89\% on Humanoid-v2, Ant-v2, Walker2d-v2, and HalfCheetah-v2, respectively. Our results demonstrate the theoretical analysis in Section \ref{sec:overesimation_in_distributional}, i.e., the return distribution learning can be used to reduce overestimations without introducing 
any additional value or policy network.  As a comparison, SAC (uses clipped double Q-learning) and Double-Q SAC (uses double Q-learning) suffer from underestimations during the learning procedure. While the effect of each value learning method varies from task to task, the Q-value estimation accuracy of DSAC is higher than SAC and Double-Q SAC in most cases. This explains why DSAC exceeds Single-Q SAC, SAC, and Double-Q SAC on most benchmarks by a wide margin. Therefore, our results demonstrate that the return distribution learning can greatly improve policy performance by mitigating overestimations.}

\subsubsection{Time Efficiency}
Fig. \ref{f:time} compares the time efficiency of different off-policy algorithms. Results show that the average wall-clock time consumption per 1000 iterations of DSAC is comparable to DDPG, and much lower than SAC, TD3, and Double-Q SAC. This is because that unlike double Q-learning and clipped double Q-learning, the return distribution learning does not need to introduce any additional value network or policy network (excluding target networks) to reduce overestimations. 
\begin{figure}[!htb]
\captionsetup{justification =raggedright,
              singlelinecheck = false,labelsep=period, font=small}
\centering{\includegraphics[width=0.4\textwidth]{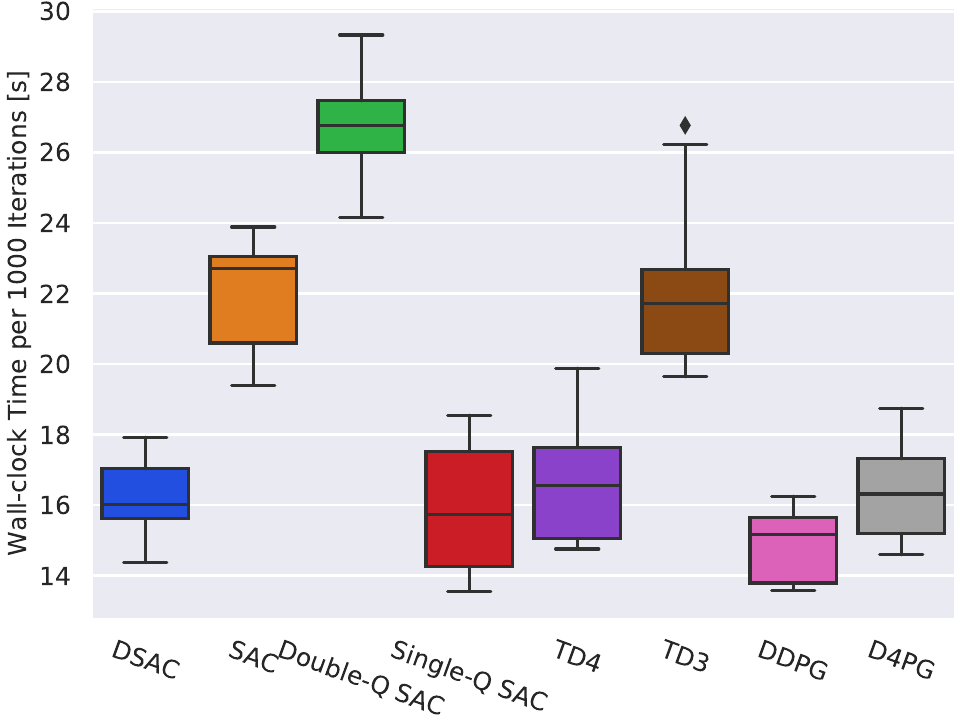}}
\caption{Algorithm comparison in terms of time efficiency on the Ant-v2 benchmark. Each boxplot is drawn based on values of 50 evaluations. All evaluations were performed on a single computer with a 2.4 GHz 20 core Intel Xeon CPU.}
\label{f:time}
\end{figure}

{\color{black}{
\subsection{Ablation Studies}

As shown in Table \ref{table.hyper}, compared with SAC, DSAC introduces two hyperparameters: 1) the minimum standard deviation $\sigma_{\text{min}}$ in \eqref{eq.sigma_min}, and 2) the clipping boundary $b$ in \eqref{eq:clipping_bound}. These two hyperparameters are employed to prevent exploding and vanishing gradient problems when learning the continuous distributional value function $\mathcal{Z}_{\theta}(\cdot|s,a)$.

We first take the Ant-v2 task as an example to analyze the influence of $\sigma_{{\rm{min}}}$ on the final performance. From \eqref{eq:distribution_gradient},  the gradients $\nabla_{\theta}J_{\mathcal{Z}}(\theta)$ are prone to explode as $\sigma_{\theta}(s,a) \rightarrow 0$. Therefore, $\sigma_{\theta}(s,a)$ should be bounded above by a specific positive value. Besides, according to the analysis in Section \ref{sec:overesimation_in_distributional}, if $\sigma_{{\rm{min}}}\ge1$, we always have $\Delta_D(s,a)\le \Delta(s,a)$. But a too large $\sigma_{{\rm{min}}}$ may reduce the estimation accuracy of the return distribution. Therefore, this paper sets $\sigma_{{\rm{min}}}=1$. Fig. \ref{fig:ablation_sigma} graphs the average final return of DSAC under different $\sigma_{\rm{min}}$ values on Ant-v2. Our results show that when $\sigma_{\rm{min}}=1$, DSAC achieves the best final performance on Ant-v2, which is consistent with the above analysis.

We additionally perform the ablation study to compare the performance of DSAC with different clipping boundaries $b$. Our results are presented in Fig. \ref{fig:ablation_bound}. In this paper, the clipping boundary $b$ is employed to stabilize the learning process of $\sigma_{\theta}(s,a)$ and keep it in a reasonable range.  Results indicate that compared with the performance of removing the clipping boundary trick from DSAC (i.e., $b=+\infty$), the inclusion of $b$ (for different $b$ values) generally improves performance. Therefore, DSAC appears to benefit greatly from the clipping boundary trick. However, the final performance is a little bit sensitive to the value of $b$. This is because that a too small $b$ will reduce the learning accuracy of the return distribution, while a too large $b$ cannot effectively limit the range of $\sigma_{\theta}(s,a)$. In practical applications, it is usually necessary to select an appropriate $b$ value according to the range of the state-action return $Z(s,a)$, which limits the flexibility of the DSAC algorithm. We will focus on this issue in the future.}}

\begin{figure}[!htb]
\centering
\captionsetup[subfigure]{justification=centering}
\subfloat[Performance under different $\sigma_{\rm{min}}$]{\label{fig:ablation_sigma}\includegraphics[width=0.25\textwidth]{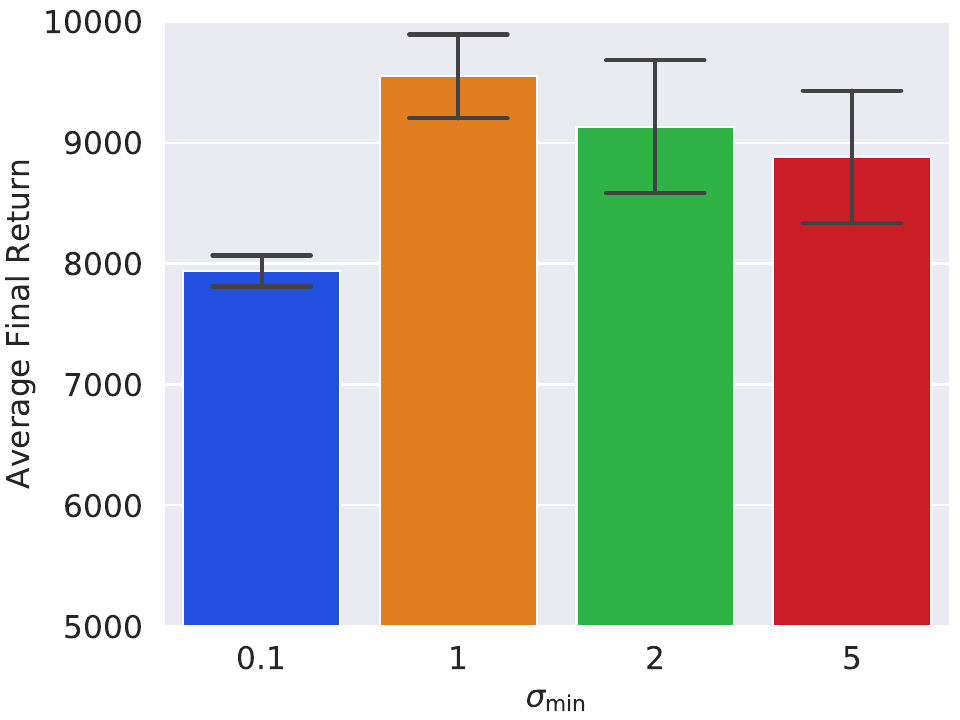}} 
\subfloat[Performance under different $b$]{\label{fig:ablation_bound}\includegraphics[width=0.25\textwidth]{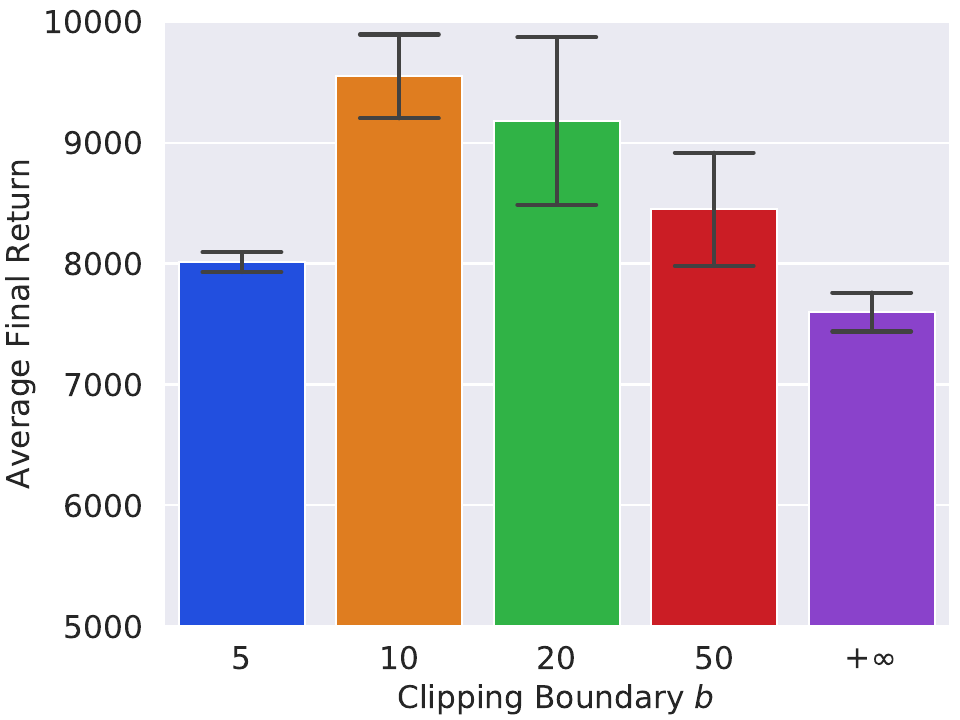}} 
\caption{\textcolor{black}{Average final return of DSAC under different hyperparameters on Ant-v2 over 5 runs. (a) $b=10$. (b) $\sigma_{\rm{min}}=1$.}}
\label{f.ablation}
\end{figure}

\section{Conclusions}
\label{sec:conclusion}
In this paper, we propose an off-policy RL algorithm for continuous control setting, called distributional soft actor-critic (DSAC), to mitigate Q-value overestimations, thereby improving policy performance. We first discover in theory that the update stepsize of the Q-value function in distributional RL decreases squarely as the standard deviation of state-action returns increases, thus mitigating Q-value overestimations. Then, a distributional soft policy iteration (DSPI) framework is developed by embedding the return distribution function into maximum entropy RL, which alternates  between distributional soft policy evaluation and soft policy improvement. Next, a deep off-policy  actor-critic variant of DSPI, 
i.e., DSAC, is proposed to directly learn a continuous return distribution by keeping the variance of the state-action returns within reasonable range to address exploding and vanishing gradient problems. We evaluate DSAC and 9 baselines (such as SAC, TD3, PPO, DDPG) on the suite of MuJoCo tasks. Results show that DSAC outperforms or matches all other baseline algorithms across all benchmarks. 

\appendices
\section{Proof of Convergence of Distributional Soft Policy Iteration}
\label{appen.proof}
In this appendix, we present proofs to show that Distributional Soft Policy Iteration (DSPI), which alternates between \eqref{eq.soft_distri_bellman} and \eqref{eq.policy_imp}, would lead to policy improvement with respect to the maximum entropy objective. The proofs borrow heavily from the policy evaluation and policy improvement theorems of Q-learning, distributional RL and soft Q-learning \cite{sutton2018reinforcement,bellemare2017C51,Haarnoja2018SAC}.
\begin{lemma}\label{lemma.dspe}
(Distributional Soft Policy Evaluation). Consider the distributional soft bellman backup operator $\mathcal{T}^{\pi}_{\mathcal{D}}$ in \eqref{eq.soft_distri_bellman} and a soft state-action distribution function $\mathcal{Z}^{0}(Z^0(s,a)|s,a): \mathcal{S}\times\mathcal{A}\rightarrow \mathcal{P}(Z^0(s,a))$, which maps a state-action pair $(s,a)$ to a distribution over random soft state-action returns $Z^{0}(s,a)$, and define $Z^{i+1}(s,a)=\mathcal{T}^{\pi}_{\mathcal{D}}Z^{i}(s,a)$, where $Z^{i+1}(s,a) \sim \mathcal{Z}^{i+1}(\cdot|s,a)$. Then the sequence $\mathcal{Z}^{i}$ will converge to $\mathcal{Z}^{\pi}$ as $i\rightarrow \infty$.
\end{lemma}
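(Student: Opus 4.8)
The plan is to equip the space of soft return distribution functions with a supremal Wasserstein metric and to show that $\mathcal{T}^{\pi}_{\mathcal{D}}$ is a contraction in this metric, so that convergence of the iterates follows from the Banach fixed-point theorem, mirroring the argument of \citet{bellemare2017C51} but with the entropy term accounted for. Concretely, for two return distribution functions $\mathcal{Z}_1,\mathcal{Z}_2$ I would define
\begin{equation}
\nonumber
\bar{d}_p(\mathcal{Z}_1,\mathcal{Z}_2)=\sup_{(s,a)} W_p\big(\mathcal{Z}_1(\cdot|s,a),\mathcal{Z}_2(\cdot|s,a)\big),
\end{equation}
where $W_p$ is the $p$-Wasserstein distance between two scalar return distributions. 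Since the rewards are bounded and $\gamma<1$, the soft returns have bounded support, so these distances are finite and $\bar{d}_p$ is a well-defined complete metric on the space of return distribution functions.

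The central step is to show that $\mathcal{T}^{\pi}_{\mathcal{D}}$ is a $\gamma$-contraction under $\bar{d}_p$. First I would recall the two structural properties of $W_p$ that carry the argument: translation invariance under a common deterministic shift, $W_p(A+U,A+V)\le W_p(U,V)$ for $A$ independent of $U,V$; and positive scaling, $W_p(cU,cV)=|c|\,W_p(U,V)$. Now fix $(s,a)$ and compare the two target random variables produced by applying Equation \eqref{eq.soft_distri_bellman} to $\mathcal{Z}_1$ and $\mathcal{Z}_2$. Under a common draw of $r\sim R(\cdot|s,a)$, $s'\sim p$, and $a'\sim\pi$, both targets carry the identical additive term $r-\gamma\alpha\log\pi(a'|s')$ and differ only through $\gamma Z_1(s',a')$ versus $\gamma Z_2(s',a')$. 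Because the entropy term is a deterministic function of $(s',a')$, it behaves exactly like the reward shift and cancels by translation invariance; this is the only place where the maximum-entropy modification enters, and it leaves the standard distributional contraction argument intact.

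Combining the scaling and shift bounds with the convexity (mixture) property of $W_p$ over the joint randomness in $(r,s',a')$, I would obtain
\begin{equation}
\nonumber
W_p\big(\mathcal{T}^{\pi}_{\mathcal{D}}\mathcal{Z}_1(\cdot|s,a),\mathcal{T}^{\pi}_{\mathcal{D}}\mathcal{Z}_2(\cdot|s,a)\big)\le \gamma\sup_{(s',a')}W_p\big(\mathcal{Z}_1(\cdot|s',a'),\mathcal{Z}_2(\cdot|s',a')\big)=\gamma\,\bar{d}_p(\mathcal{Z}_1,\mathcal{Z}_2).
\end{equation}
Taking the supremum over $(s,a)$ on the left yields $\bar{d}_p(\mathcal{T}^{\pi}_{\mathcal{D}}\mathcal{Z}_1,\mathcal{T}^{\pi}_{\mathcal{D}}\mathcal{Z}_2)\le\gamma\,\bar{d}_p(\mathcal{Z}_1,\mathcal{Z}_2)$. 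Since $\mathcal{Z}^{\pi}$ is by definition a fixed point of $\mathcal{T}^{\pi}_{\mathcal{D}}$, the Banach fixed-point theorem gives that the iterates $\mathcal{Z}^{i+1}=\mathcal{T}^{\pi}_{\mathcal{D}}\mathcal{Z}^{i}$ converge to the unique fixed point $\mathcal{Z}^{\pi}$ in $\bar{d}_p$, which is exactly the claim.

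The main obstacle I anticipate is technical bookkeeping rather than any deep idea: one must carefully justify the mixture inequality for $W_p$ over the joint randomness in $(r,s',a')$ and confirm that averaging over next state-action pairs does not inflate the bound beyond the $\gamma$ factor. Care is also needed to verify completeness of $\bar{d}_p$ on the relevant space of bounded-support distributions so that the fixed-point theorem applies, and this is precisely where the boundedness of the reward function is essential.
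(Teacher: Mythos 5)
Your proposal is correct and follows essentially the same route as the paper: the paper likewise absorbs the entropy term into an augmented reward $r_{\pi}(s,a)=r(s,a)-\gamma\alpha\log\pi(a'|s')$ and then directly cites Lemma 3 of \citet{bellemare2017C51} for the $\gamma$-contraction of the resulting distributional Bellman operator in the maximal Wasserstein metric, concluding via the fixed-point argument. The only difference is that you unpack that cited contraction lemma (translation invariance, scaling, and the mixture inequality for $W_p$) rather than invoking it, which if anything makes your version more self-contained than the paper's.
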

\begin{proof}
Let $\overline{Z}$ denote the space of soft return function $Z$. Define the entropy augmented reward as $r_{\pi}(s,a)=r(s,a)-\gamma\alpha\log\pi(a'|s')$ and rewrite the distributional soft Bellman operator as
\begin{equation}
\nonumber
\mathcal{T}^{\pi}_{\mathcal{D}}Z(s,a) \overset{D}{=}r_{\pi}(s,a)+\gamma Z(s',a'),
\end{equation}
where $r\sim R(\cdot|s,a),s'\sim p,a'\sim\pi$. Then we can directly apply the standard convergence results for policy evaluation of distributional RL \cite{bellemare2017C51}, that is,  $\mathcal{T}^{\pi}_{\mathcal{D}}: \overline{Z}\rightarrow\overline{Z}$ is a $\gamma$-contraction in terms of some measure. Therefore, $\mathcal{T}^{\pi}_{\mathcal{D}}$ has a unique fixed point, which is $Z^{\pi}$, and the sequence $Z^{i}$ will converge to it as $i\rightarrow \infty$, i.e., $\mathcal{Z}^{i}$ will converge to $\mathcal{Z}^{\pi}$ as $i\rightarrow \infty$.
\end{proof}

\begin{lemma}\label{lemma.dspi}
(Soft Policy Improvement) Let $\pi_{\text{new}}$ be the optimal solution of the maximization problem defined in \eqref{eq.policy_imp}. Then $Q^{\pi_{\text{new}}}(s,a) \ge Q^{\pi_{\text{old}}}(s,a)$ for $\forall(s,a) \in \mathcal{S}\times\mathcal{A}$.
\begin{proof}
From \eqref{eq.policy_imp}, one has
\begin{equation}
    \pi_{\text{new}}(\cdot|s) = \arg\max_{\pi}\Exp_{a\sim \pi}[Q^{\pi_{\text{old}}}(s,a)-\alpha\log\pi(a|s)], \quad\forall s \in \mathcal{S},
\end{equation}
then it is obvious that
\begin{equation}
\begin{aligned}
&\Exp_{a\sim \pi_{\text{new}}}[Q^{\pi_{\text{old}}}(s,a)-\alpha\log\pi_{\text{new}}(a|s)]\ge \\
& \qquad \quad \Exp_{a\sim \pi_{\text{old}}}[Q^{\pi_{\text{old}}}(s,a)-\alpha\log\pi_{\text{old}}(a|s)], \quad \forall s \in \mathcal{S}.
\end{aligned}
\end{equation} 
Next, from \eqref{eq.soft_bellman}, it follows that
\begin{equation}
\nonumber
\begin{aligned}
&Q^{\pi_{\text{old}}}(s, a) \\
&= \Exp_{r\sim R(\cdot|s,a)}[r]+\gamma\Exp_{s'\sim p,a'\sim \pi_{\text{old}}}[Q^{\pi_{\text{old}}}(s',a')-\alpha\log\pi_{\text{old}}(a'|s')]\\
&\le \Exp_{r\sim R(\cdot|s,a)}[r]+\gamma\Exp_{s'\sim p,a'\sim \pi_{\text{new}}}[Q^{\pi_{\text{old}}}(s',a')-\alpha\log\pi_{\text{new}}(a'|s')]\\
&\vdots\\
&\le Q^{\pi_{\text{new}}}(s,a),  \quad \forall(s,a)\in\mathcal{S}\times\mathcal{A},
\end{aligned}
\end{equation}
where we have repeatedly expanded $Q^{\pi_{\text{old}}}$ on the right-hand side by applying \eqref{eq.soft_bellman}.
\end{proof}
\end{lemma}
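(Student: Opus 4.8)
The plan is to adapt the standard soft policy improvement argument to our entropy-augmented setting, relying only on the variational characterization of $\pi_{\text{new}}$ as the state-wise maximizer in Equation \eqref{eq.policy_imp} together with the soft Bellman recursion of Equation \eqref{eq.soft_bellman}. The strategy has three stages: extract a one-step comparison inequality from optimality, feed it recursively into the Bellman equation for $Q^{\pi_{\text{old}}}$, and then pass to the limit.

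First I would exploit that $\pi_{\text{old}}$ is itself a feasible competitor in the maximization defining $\pi_{\text{new}}$. Since Equation \eqref{eq.policy_imp} is solved independently at each state, optimality of $\pi_{\text{new}}$ yields, for every fixed $s$,
\begin{equation*}
\Exp_{a\sim \pi_{\text{new}}}[Q^{\pi_{\text{old}}}(s,a)-\alpha\log\pi_{\text{new}}(a|s)] \ge \Exp_{a\sim \pi_{\text{old}}}[Q^{\pi_{\text{old}}}(s,a)-\alpha\log\pi_{\text{old}}(a|s)].
\end{equation*}
The right-hand side is precisely the entropy-augmented one-step value of $\pi_{\text{old}}$ appearing inside the soft Bellman backup.

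Next I would substitute this comparison into the soft Bellman equation for $Q^{\pi_{\text{old}}}$. Writing the one-step backup from Equation \eqref{eq.soft_bellman} and replacing the $\pi_{\text{old}}$-expectation at the successor state by the larger $\pi_{\text{new}}$-expectation gives
\begin{equation*}
Q^{\pi_{\text{old}}}(s,a) \le \Exp_{r\sim R(\cdot|s,a)}[r] + \gamma\,\Exp_{s'\sim p,\,a'\sim\pi_{\text{new}}}[Q^{\pi_{\text{old}}}(s',a')-\alpha\log\pi_{\text{new}}(a'|s')].
\end{equation*}
The idea is then to apply the same bound recursively to the inner $Q^{\pi_{\text{old}}}(s',a')$ term, peeling off one discounted entropy-augmented reward at each step. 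After $n$ substitutions the right-hand side becomes the $n$-step truncation of the soft return generated by $\pi_{\text{new}}$ plus a remainder carrying a factor $\gamma^{n}$.

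Finally I would take $n\to\infty$. Because $\gamma<1$ and the rewards are bounded (hence the per-step entropy contributions are controlled), the $\gamma^{n}$ remainder vanishes and the truncated sum converges to $Q^{\pi_{\text{new}}}(s,a)$ by the definition in Equation \eqref{eq.Q_definition}, giving $Q^{\pi_{\text{old}}}(s,a)\le Q^{\pi_{\text{new}}}(s,a)$ for all $(s,a)$. I expect the monotone substitution step to be routine once the base inequality is established; the main obstacle is the limiting argument, where one must verify that the iterated expectations are well-defined and that the order-$\gamma^{n}$ tail is uniformly controlled, which is exactly where boundedness of the rewards and of the entropy terms guarantees absolute convergence of the soft-return series.
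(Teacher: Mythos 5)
Your proposal is correct and takes essentially the same route as the paper: use the feasibility of $\pi_{\text{old}}$ in the state-wise maximization of Equation \eqref{eq.policy_imp} to obtain the one-step comparison inequality, then substitute it recursively into the soft Bellman equation \eqref{eq.soft_bellman} until the bound converges to $Q^{\pi_{\text{new}}}(s,a)$. Your explicit handling of the $\gamma^{n}$ remainder under bounded rewards and entropy simply spells out what the paper compresses into its ``$\vdots$'' step.
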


\begin{theorem}\label{theorem.dspi}
(Distributional Soft Policy Iteration). The distributional soft policy iteration, which alternates between distributional soft policy evaluation and soft policy improvement, can converge to a policy $\pi^*$ such that $Q^{\pi^*}(s, a)\ge Q^{\pi}(s, a)$ for $\forall\pi$ and $\forall (s, a)\in\mathcal{S}\times\mathcal{A}$, assuming that $|\mathcal{A}|<\infty$ and reward is bounded.
\end{theorem}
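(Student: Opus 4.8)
The plan is to follow the classical policy-iteration template, combining the two lemmas already established. I would construct a sequence of policies $\{\pi_k\}$ by alternately applying distributional soft policy evaluation and soft policy improvement: starting from any $\pi_0$, I use Lemma \ref{lemma.dspe} to obtain the return distribution $\mathcal{Z}^{\pi_k}$, extract the soft Q-value via $Q^{\pi_k}(s,a)=\mathbb{E}[Z^{\pi_k}(s,a)]$ from Equation \eqref{eq.Q_equal_exp_Z}, and then use Equation \eqref{eq.policy_imp} to produce $\pi_{k+1}$. The first thing to verify is that this extracted $Q^{\pi_k}$ genuinely satisfies the soft Bellman equation \eqref{eq.soft_bellman}: taking expectations on both sides of the distributional Bellman backup \eqref{eq.soft_distri_bellman} and using linearity of expectation recovers \eqref{eq.soft_bellman}, so Lemma \ref{lemma.dspi} applies verbatim to this $Q^{\pi_k}$.

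Given this, I would argue monotonicity and boundedness. By Lemma \ref{lemma.dspi}, the sequence $Q^{\pi_k}(s,a)$ is nondecreasing in $k$ for every $(s,a)$. Since the reward is bounded and $|\mathcal{A}|<\infty$, the per-step entropy $\mathcal{H}(\pi(\cdot|s))$ is bounded above (e.g.\ by $\log|\mathcal{A}|$ in the discrete limit, and in general by the finiteness assumption), so the entropy-augmented return and hence $Q^{\pi_k}$ are uniformly bounded above for $\gamma\in[0,1)$. A monotonically nondecreasing sequence that is bounded above converges pointwise; I denote the limit by $Q^{\pi^*}$ with associated limiting policy $\pi^*$.

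The final and most delicate step is to show that the limit $\pi^*$ is in fact optimal, i.e.\ $Q^{\pi^*}(s,a)\ge Q^{\pi}(s,a)$ for all $\pi$ and all $(s,a)$, rather than merely a stationary point of the iteration. At convergence $\pi^*$ is a fixed point of the improvement map, so it maximizes the right-hand side of \eqref{eq.policy_imp} with respect to $Q^{\pi^*}$; consequently, for any competing policy $\pi$ one has $\mathbb{E}_{a\sim \pi}[Q^{\pi^*}(s,a)-\alpha\log\pi(a|s)]\le \mathbb{E}_{a\sim \pi^*}[Q^{\pi^*}(s,a)-\alpha\log\pi^*(a|s)]$ for all $s$. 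I would then reproduce the telescoping argument from the proof of Lemma \ref{lemma.dspi}, repeatedly substituting this inequality into the soft Bellman recursion \eqref{eq.soft_bellman} for $Q^{\pi^*}$, to conclude $Q^{\pi^*}(s,a)\ge Q^{\pi}(s,a)$ everywhere.

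I expect the main obstacle to be this last optimality argument: monotonicity and boundedness only give convergence of the sequence of iterates, but one must separately verify that the limiting fixed point dominates \emph{every} policy and not just the iterates, which relies on the fixed-point characterization of $\pi^*$ together with the contraction and telescoping machinery. The finiteness of $\mathcal{A}$ and the boundedness of the reward enter precisely to guarantee the entropy term is bounded and the monotone limit is well-defined; the distributional evaluation step itself does not alter the scalar-$Q$ convergence argument, since it only supplies a richer representation whose expectation obeys the same soft Bellman recursion.
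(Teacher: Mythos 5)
Your proposal is correct and takes essentially the same route as the paper's proof: obtain $\mathcal{Z}^{\pi_k}$ via Lemma \ref{lemma.dspe} and extract $Q^{\pi_k}$ through Equation \eqref{eq.Q_equal_exp_Z}, get monotonicity from Lemma \ref{lemma.dspi}, use boundedness of the reward and entropy to conclude convergence of the monotone sequence, and at the limit invoke the fixed-point inequality of the improvement step together with the same telescoping soft-Bellman argument to show the limit dominates every policy. Your explicit verification that taking expectations of the distributional backup \eqref{eq.soft_distri_bellman} recovers the soft Bellman equation \eqref{eq.soft_bellman} is a minor addition the paper leaves implicit, not a departure from its argument.
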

\begin{proof}
Let $\pi_k$ denote the policy at iteration $k$. For $\forall \pi_k$, we can always find its associated $\mathcal{Z}^{\pi_k}$ through distributional soft policy evaluation process follows from Lemma \ref{lemma.dspe}. Therefore, we can obtain $Q^{\pi_k}$ according to \eqref{eq.Q_equal_exp_Z}. By Lemma \ref{lemma.dspi}, the sequence $Q^{\pi_k}(s,a)$ is monotonically increasing for $\forall(s,a)\in\mathcal{S}\times\mathcal{A}$. Since $Q^{\pi}$ is bounded everywhere for $\forall\pi$ (both the reward and policy entropy are bounded), the policy sequence  $\pi_k$ converges to some $\pi^{\dagger}$ as $k\rightarrow\infty$. At convergence, it must follow that
\begin{equation}
\begin{aligned}
&\Exp_{a\sim \pi^{\dagger}}[Q^{\pi^{\dagger}}(s,a)-\alpha\log\pi^{\dagger}(a|s)]\ge \\
&\qquad \quad \Exp_{a\sim \pi}[Q^{\pi^{\dagger}}(s,a)-\alpha\log\pi(a|s)],\quad \forall \pi, \forall s \in \mathcal{S}.
\end{aligned}
\end{equation}
Using the same iterative argument as in Lemma \ref{lemma.dspi}, we have
\begin{equation}
\nonumber
 Q^{\pi^{\dagger}}(s, a) \ge Q^{\pi}(s, a), \quad \forall \pi, \forall(s,a)\in\mathcal{S}\times\mathcal{A}.
\end{equation}
Hence $\pi^{\dagger}$ is optimal, i.e., $\pi^{\dagger}=\pi^*$.
\end{proof}

\section{Derivations}
\subsection{Derivation of the Standard Deviation in Distributional Q-learning}
\label{appen.derivation_std}
Since the random error $\epsilon_Q$ in \eqref{eq.approximate_error} is assumed to be independent of $(s,a)$, $\delta$ in \eqref{eq.delta_definition} can be further expressed as 
\begin{equation}
\nonumber
\begin{aligned}
\delta & = \mathbb{E}_{\epsilon_Q'}\big[\mathbb{E}_{s'}[\max_{a'}Q(s',a')]\big]-\mathbb{E}_{s'}[\max_{a'}\tilde{Q}(s',a')]\\
&=\mathbb{E}_{\epsilon_Q'}\big[\mathbb{E}_{s'}[\max_{a'}Q_{\theta}(s',a')-\max_{a'}\tilde{Q}(s',a')]\big].
\end{aligned}
\end{equation}
Defining $\eta=\mathbb{E}_{s'}\big[\max_{a'}Q_{\theta}(s',a')-\max_{a'}\tilde{Q}(s',a')\big]$, it follows that
\begin{equation}
\nonumber
\delta=\mathbb{E}_{\epsilon_Q'}[\eta].
\end{equation}
From \eqref{eq.phi_appro_DRL}, we linearize the post-update standard deviation around $\psi$ using Taylor’s expansion
\begin{equation}
\nonumber
\begin{aligned}
&\sigma_{\psi_{\rm{new}}}(s,a)\approx \\
&\qquad\sigma_{\psi}(s,a)+\beta\frac{\Delta\sigma^2+(y-Q_{\theta}(s,a))^2}{{\sigma_{\psi}(s,a)}^3}\|\nabla_{\psi}\sigma_{\psi}(s,a)\|_2^2.
\end{aligned}
\end{equation}
Then, in expectation, the post-update standard deviation is
\begin{equation}
\nonumber
\begin{aligned}
&\mathbb{E}_{\epsilon_Q,\epsilon_Q'}[\sigma_{\psi_{\rm{new}}}(s,a)]\approx \sigma_{\psi}(s,a) +\\
&\qquad\qquad \beta\frac{\Delta\sigma^2+\mathbb{E}_{\epsilon_Q,\epsilon_Q'}[(y-Q_{\theta}(s,a))^2]}{{\sigma_{\psi}(s,a)}^3}\|\nabla_{\psi}\sigma_{\psi}(s,a)\|_2^2.
\end{aligned}
\end{equation}
Since $\Exp_{\epsilon_Q}[\epsilon_Q]=0$, the $\mathbb{E}_{\epsilon_Q,\epsilon_Q'}[(y-Q_{\theta}(s,a))^2]$ term can be expanded as 
\begin{equation}
\nonumber
\begin{aligned}
&\mathbb{E}_{\epsilon_Q,\epsilon_Q'}[(y-Q_{\theta}(s,a))^2]\\
&\ = \mathbb{E}_{\epsilon_Q,\epsilon_Q'}\big[(\mathbb{E}[r] + \gamma\mathbb{E}_{s'}[\max_{a'}Q_{\theta}(s',a')]-Q_{\theta}(s,a))^2\big]\\
&\ =\mathbb{E}_{\epsilon_Q,\epsilon_Q'}\big[(\mathbb{E}[r] + \gamma\mathbb{E}_{s'}[\max_{a'}\tilde{Q}(s',a')]+\gamma\eta-\tilde{Q}(s,a)-\epsilon_Q)^2\big]\\
&\ =\mathbb{E}_{\epsilon_Q,\epsilon_Q'}\big[(\tilde{y}-\tilde{Q}(s,a)+\gamma\eta-\epsilon_Q)^2\big]\\
&\ =(\tilde{y}-\tilde{Q}(s,a))^2+\mathbb{E}_{\epsilon_Q,\epsilon_Q'}\big[(\gamma\eta-\epsilon_Q)^2\big]+\\
& \qquad \qquad  \qquad \qquad  \qquad \qquad  \mathbb{E}_{\epsilon_Q,\epsilon_Q'}\big[2(\tilde{y}-\tilde{Q}(s,a))(\gamma\eta-\epsilon_Q)\big]\\
&\ =(\tilde{y}-\tilde{Q}(s,a))^2+\gamma^2\mathbb{E}_{\epsilon_Q'}[\eta^2]+\mathbb{E}_{\epsilon_Q}[{\epsilon_Q}^2]+ \\
& \qquad 2\gamma(\tilde{y}-\tilde{Q}(s,a))\mathbb{E}_{\epsilon_Q'}[\eta]-2(\gamma\mathbb{E}_{\epsilon_Q'}[\eta]+\tilde{y}-\tilde{Q}(s,a))\mathbb{E}_{\epsilon_Q}[\epsilon_Q]\\ 
&\ =(\tilde{y}-\tilde{Q}(s,a))^2+\gamma^2\mathbb{E}_{\epsilon_Q'}[\eta^2]+\mathbb{E}_{\epsilon_Q}[{\epsilon_Q}^2]+2\gamma\delta(\tilde{y}-\tilde{Q}(s,a)).
\end{aligned}
\end{equation}

In an ideal situation, when $\tilde{Q}(s,a)=\tilde{y}$, that is, $\tilde{Q}(s,a)$ has converged after a period of learning, we further have
\begin{equation}
\nonumber
\mathbb{E}_{\epsilon_Q,\epsilon_Q'}[(y-Q_{\theta}(s,a))^2]=\gamma^2\mathbb{E}_{\epsilon_Q'}[\eta^2]+\mathbb{E}_{\epsilon_Q}[{\epsilon_Q}^2].
\end{equation}
Furthermore, since $\mathbb{E}_{\epsilon_Q'}[\eta^2]\ge \mathbb{E}_{\epsilon_Q'}{[\eta]}^2$, we have
\begin{equation}
\nonumber
\begin{aligned}
&\mathbb{E}_{\epsilon_Q,\epsilon_Q'}[\sigma_{\psi_{\rm{new}}}(s,a)]\\
&\approx \sigma_{\psi}(s,a)+\beta\frac{\Delta\sigma^2+\gamma^2\mathbb{E}_{\epsilon_Q'}[\eta^2]+\mathbb{E}_{\epsilon_Q}[{\epsilon_Q}^2]}{{\sigma_{\psi}(s,a)}^3}\|\nabla_{\psi}\sigma_{\psi}(s,a)\|_2^2\\
&\ge\sigma_{\psi}(s,a)+\beta\frac{\Delta\sigma^2+\gamma^2\mathbb{E}_{\epsilon_Q'}{[\eta]}^2+\mathbb{E}_{\epsilon_Q}[{\epsilon_Q}^2]}{{\sigma_{\psi}(s,a)}^3}\|\nabla_{\psi}\sigma_{\psi}(s,a)\|_2^2\\
&=\sigma_{\psi}(s,a)+\beta\frac{\Delta\sigma^2+\gamma^2\delta^2+\mathbb{E}_{\epsilon_Q}[{\epsilon_Q}^2]}{{\sigma_{\psi}(s,a)}^3}\|\nabla_{\psi}\sigma_{\psi}(s,a)\|_2^2.
\end{aligned}
\end{equation}

\subsection{Derivation of the Objective Function for Soft Return Distribution Update }
\label{appen.derivation_object}
From \eqref{eq.distributional_Bellman}, the loss function for soft state-action return distribution under the KL-divergence measurement is
\begin{equation}
\nonumber
\begin{aligned}
&J_{\mathcal{Z}}(\theta)\\
&=  \mathbb{E}_{(s,a)\sim\mathcal{B}}\Big[D_{\rm{KL}}(\mathcal{T}^{\pi_{\phi'}}_{\mathcal{D}}\mathcal{Z}_{\theta'}(\cdot|s,a),\mathcal{Z}_{\theta}(\cdot|s,a))\Big]\\
&=  \mathbb{E}_{(s,a)\sim\mathcal{B}}\Big[\sum_{\mathcal{T}^{\pi_{\phi'}}_{\mathcal{D}}Z(s,a)}\mathcal{P}(\mathcal{T}^{\pi_{\phi'}}_{\mathcal{D}}Z(s,a)|\mathcal{T}^{\pi_{\phi'}}_{\mathcal{D}}\mathcal{Z}_{\theta'}(\cdot|s,a))\\
& \qquad\qquad\qquad\qquad\qquad \log\frac{\mathcal{P}(\mathcal{T}^{\pi_{\phi'}}_{\mathcal{D}}Z(s,a)|\mathcal{T}^{\pi_{\phi'}}_{\mathcal{D}}\mathcal{Z}_{\theta'}(\cdot|s,a))}{\mathcal{P}(\mathcal{T}^{\pi_{\phi'}}_{\mathcal{D}}Z(s,a)|\mathcal{Z}_{\theta}(\cdot|s,a))}\Big]\\
&=  -\mathbb{E}_{(s,a)\sim\mathcal{B}}\Big[\sum_{\mathcal{T}^{\pi_{\phi'}}_{\mathcal{D}}Z(s,a)}\mathcal{P}(\mathcal{T}^{\pi_{\phi'}}_{\mathcal{D}}Z(s,a)|\mathcal{T}^{\pi_{\phi'}}_{\mathcal{D}}\mathcal{Z}_{\theta'}(\cdot|s,a))\\
&\qquad \qquad \qquad \qquad \qquad \log\mathcal{P}(\mathcal{T}^{\pi_{\phi'}}_{\mathcal{D}}Z(s,a)|\mathcal{Z}_{\theta}(\cdot|s,a))\Big]+c\\
&=  -\mathbb{E}_{(s,a)\sim\mathcal{B}}\Big[\mathbb{E}_{\mathcal{T}^{\pi_{\phi'}}_{\mathcal{D}}Z(s,a)\sim\mathcal{T}^{\pi_{\phi'}}_{\mathcal{D}}\mathcal{Z}_{\theta'}(\cdot|s,a)}\\
&\qquad \qquad \qquad \qquad \qquad \log\mathcal{P}(\mathcal{T}^{\pi_{\phi'}}_{\mathcal{D}}Z(s,a)|\mathcal{Z}_{\theta}(\cdot|s,a))\Big]+c\\
&=  -\mathbb{E}_{(s,a)\sim\mathcal{B}}\Big[\Exp_{\substack{(r,s')\sim \mathcal{B},a'\sim\pi_{\phi'},\\ Z(s',a')\sim\mathcal{Z}_{\theta'}(\cdot|s',a')}}\\
&\qquad \qquad \qquad \qquad \qquad\log\mathcal{P}(\mathcal{T}^{\pi_{\phi'}}_{\mathcal{D}}Z(s,a)|\mathcal{Z}_{\theta}(\cdot|s,a))\Big]+c\\
&= -\Exp_{\substack{(s,a,r,s')\sim\mathcal{B},a'\sim\pi_{\phi'},\\Z(s',a')\sim\mathcal{Z}_{\theta'}(\cdot|s',a')}}\Big[\log\mathcal{P}(\mathcal{T}^{\pi_{\phi'}}_{\mathcal{D}}Z(s,a)|\mathcal{Z}_{\theta}(\cdot|s,a))\Big]+c,
\end{aligned}
\end{equation}
where $c$ is an item independent of $\theta$.

\subsection{Probability Density of the Bounded Actions}
\label{appen.pdf_log_pi}
For algorithms with stochastic policy, we use an unbounded Gaussian as the action distribution $\mu$. However, in practice, the action usually needs to be bounded to a finite interval denoted as $[a_{\text{min}},a_{\text{max}}]$, where $a_{\text{min}}\in\mathbb{R}^{{\rm{dim}}(\mathcal{A})}$ and $a_{\text{max}}\in\mathbb{R}^{{\rm{dim}}(\mathcal{A})}$. Let $u\in\mathbb{R}^{{\rm{dim}}(\mathcal{A})}$ denote a random variable sampled from $\mu$. To account for the action constraint, we project $u$ into a desired action by 
\begin{equation}
\nonumber
a=\frac{a_{\text{max}}-a_{\text{min}}}{2}\odot\tanh(u)+\frac{a_{\text{max}}+a_{\text{min}}}{2},
\end{equation}
where $\odot$ represents the Hadamard product and $\tanh$ is applied element-wise. From \cite{Haarnoja2018SAC}, the probability density of $a$ is given by
\begin{equation}
\nonumber
\pi(a|s)=\mu(u|s)\Big|\det\Big(\frac{\text{d}a}{\text{d}u}\Big)\Big|^{-1}.
\end{equation}
The log-likelihood of $\pi(a|s)$ can be expressed as
\begin{equation}
\nonumber
\begin{aligned}
\log\pi(a|s)=&\log\mu(u|s)-\\
&\sum^{{\rm{dim}}(\mathcal{A})}_{i=1}\Big(\log(1-\tanh^2(u_i))+\log\frac{{a_{\text{max}}}_i-{a_{\text{min}}}_i}{2}\Big).
\end{aligned}
\end{equation}

\subsection{Policy Update Gradients Based on the Soft-Action Return}
\label{appen.policy_update}
If $Q_{\theta}(s,a)$ cannot be expressed explicitly through $\theta$, besides \eqref{eq.reparameter_a}, we also need to reparameterize the random return $Z(s,a)$ as 
\begin{equation}
\nonumber
Z(s,a)=g_{\theta}(\xi_Z;s,a).
\end{equation}
In this case, we have 
\begin{equation}
\nonumber
\begin{aligned}
&\nabla_{\phi}J_{\pi}(\phi)=\mathbb{E}_{s\sim \mathcal{B},\xi_Z,\xi_a}\Big[-\alpha\nabla_{\phi}\log(\pi_{\phi}(a|s))+
\\&\qquad(\nabla_ag_{\theta}(\xi_Z;s,a)-\alpha\nabla_a\log(\pi_{\phi}(a|s)))\nabla_{\phi}f_{\phi}(\xi_a;s)\Big].
\end{aligned}
\end{equation}
Besides, the distribution $\mathcal{Z_{\theta}}$ offers a richer set of predictions for learning than its expected value $Q_{\theta}$. Therefore, we can also choose to maximize the $i$th percentile of $\mathcal{Z_{\theta}}$
\begin{equation}
\nonumber
J_{\pi,i}(\phi)=\mathbb{E}_{s\sim \mathcal{B},a\sim\pi_{\phi}}[P_i(\mathcal{Z}_{\theta}(s,a))-\alpha\log(\pi_{\phi}(a|s))],
\end{equation}
where $P_i$ denotes the $i$th percentile. For example, $i$ should be a smaller value for risk-aware policies learning. The gradients of this objective can also be easily approximated using the reparamterization trick. 

\section{Experimental Details}
\label{appen:baseline algorithms}
{\color{black}{
\subsection{Brief Descriptions of Benchmarks}
\label{appen:benchmarks}
The Humanoid-v2 task aims to make a three-dimensional bipedal robot walk forward as fast as possible, without falling over. Its state is described by 376-dimensional information, including the position and velocity of joints, the inertia and velocity at the center of mass, and actuator forces. The action of this task is composed of the torque applied over 17 joints. The reward function is designed to punish the actions that cost a lot of energy or cause mission failure. Similarly, Walker2d-v2 is a two-dimensional bipedal robot which possesses 17-dimensional states and 6-dimensional actions. The Ant-v2 task aims to make a four-legged creature walk forward as fast as possible with a 111-dimensional state vector to describe the position and velocity of each joint. Its action consists of the torque of 8 joints, and the reward is also designed to punish the actions that cost a lot of energy or cause mission failure. Analogously, HalfCheetah-v2 is a two-legged cheetah with 17-dimensional states and 6-dimensional actions. The goal of InvertedDoublePendulum-v2, which is described by an 11-dimensional state vector, is to make two linked poles stand up on a cart as long as possible by applying a force on the cart. See https://github.com/openai/gym/tree/master/gym/envs for all details.}}

\subsection{Double-Q SAC Algorithm}
\label{appen:Double-Q SAC Algorithm}
Suppose the soft Q-value and policy are approximated by parameterized functions $Q_{\theta}(s,a)$ and $\pi_{\phi}(a|s)$ respectively. A pair of soft Q-value functions $(Q_{\theta_1},Q_{\theta_2})$ and policies $(\pi_{\phi_1},\pi_{\phi_2})$ are required in Double-Q SAC, where $\pi_{\phi_1}$ is updated with respect to $Q_{\theta_1}$ and $\pi_{\phi_2}$ with respect to $Q_{\theta_2}$. Given separate target soft Q-value functions $(Q_{\theta_1'},Q_{\theta_2'})$ and policies $(\pi_{\phi_1'},\pi_{\phi_2'})$, the update targets of $Q_{\theta_1}$ and $Q_{\theta_2}$ are calculated as:
\begin{equation}
\nonumber
\begin{aligned}
y_1 = r + \gamma (Q_{\theta_2'}(s',a')-\alpha\log(\pi_{\phi_1'}(a'|s'))), \ a'\sim\pi_{\phi_1'},\\
y_2 = r + \gamma (Q_{\theta_1'}(s',a')-\alpha\log(\pi_{\phi_2'}(a'|s'))), \ a'\sim\pi_{\phi_2'}.
\end{aligned}
\end{equation}
The soft Q-value can be trained by directly minimizing
\begin{equation}
\nonumber
J_{Q}(\theta_i) = \Exp_{(s,a,r,s')\sim\mathcal{B}, a'\sim\pi_{\phi_i'}}\big[(y_i-Q_{\theta_i}(s,a))^2\big], \  {\rm{for}} \ i\in\{1,2\}.
\end{equation}

The policy can be learned by directly maximizing a parameterized variant of the objective function in \eqref{eq.policy_imp}
\begin{equation}
\nonumber
J_{\pi}(\phi_i)=\mathbb{E}_{s\sim\mathcal{B}}\big[\mathbb{E}_{a\sim\pi_{\phi_i}}[Q_{\theta_i}(s,a)-\alpha\log(\pi_{\phi_i}(a|s))]\big].
\end{equation}
The pseudo-code of Double-Q SAC is shown in Algorithm \ref{alg:Double-Q SAC}.

\begin{algorithm}[!htb]
\caption{Double-Q SAC Algorithm}
\label{alg:Double-Q SAC}
\begin{algorithmic}
\STATE Initialize parameters $\theta_1$, $\theta_2$, $\phi_1$, $\phi_2$, and $\alpha$
\STATE Initialize target parameters $\theta_1'\leftarrow\theta_1$, $\theta_2'\leftarrow\theta_2$, $\phi_1'\leftarrow\phi_1$, $\phi_2'\leftarrow\phi_2$
\STATE Initialize learning rate $\beta_{Q}$, $\beta_{\pi}$, $\beta_{\alpha}$ and $\tau$ 
\STATE Initialize iteration index $k=0$

\REPEAT
\STATE Select action $a\sim\pi_{\phi_1}(a|s)$
\STATE Observe reward $r$ and new state $s'$
\STATE Store transition tuple $(s,a,r,s')$ in buffer $\mathcal{B}$
\STATE Sample $N$ transitions $(s,a,r,s')$ from $\mathcal{B}$
\STATE Update soft Q $\theta_i \leftarrow \theta_i - \beta_{Q}\nabla_{\theta_i}J_{Q}(\theta_i)$ for $i\in\{1,2\}$
\IF{$k$ mod $m$}
\STATE Update policy $\phi_i \leftarrow \phi_i + \beta_{\pi}\nabla_{\phi_i}J_{\pi}(\phi_i)$ for $i\in\{1,2\}$
\STATE Adjust temperature $\alpha \leftarrow \alpha - \beta_{\alpha}\nabla_{\alpha} J(\alpha)$
\STATE Update target networks:
\STATE \qquad $\theta_i' \leftarrow  \tau\theta_i+(1-\tau)\theta_i'$ for $i\in\{1,2\}$
\STATE \qquad $\phi_i' \leftarrow  \tau\phi_i+(1-\tau)\phi_i'$ for $i\in\{1,2\}$
\ENDIF
\STATE $k=k+1$
\UNTIL Convergence  
\end{algorithmic}
\end{algorithm}

\subsection{Single-Q SAC Algorithm}
\label{appen:single-Q SAC}
Suppose the soft Q-value and policy are approximated by parameterized functions $Q_{\theta}(s,a)$ and $\pi_{\phi}(a|s)$ respectively. Given separate target soft Q-value function $Q_{\theta'}$ and policy $\pi_{\phi'}$, the update target of $Q_{\theta}$ is calculated as:
\begin{equation}
\nonumber
y = r + \gamma (Q_{\theta'}(s',a')-\alpha\log(\pi_{\phi'}(a'|s'))), \ a'\sim\pi_{\phi'}.
\end{equation}
The soft Q-value can be trained by directly minimizing
\begin{equation}
\nonumber
J_{Q}(\theta) = \Exp_{(s,a,r,s')\sim\mathcal{B}, a'\sim\pi_{\phi'}}\big[(y-Q_{\theta}(s,a))^2\big].
\end{equation}

The policy can be learned by directly maximizing a parameterized variant of the objective function in \eqref{eq.policy_imp}
\begin{equation}
\nonumber
J_{\pi}(\phi)=\mathbb{E}_{s\sim\mathcal{B}}\big[\mathbb{E}_{a\sim\pi_{\phi}}[Q_{\theta}(s,a)-\alpha\log(\pi_{\phi}(a|s))]\big].
\end{equation}
The pseudo-code of Single-Q SAC is shown in Algorithm \ref{alg:Single-Q SAC}.

\begin{algorithm}[!htb]
\caption{Single-Q SAC Algorithm}
\label{alg:Single-Q SAC}
\begin{algorithmic}
\STATE Initialize parameters $\theta$, $\phi$ and $\alpha$
\STATE Initialize target parameters $\theta'\leftarrow\theta$,  $\phi'\leftarrow\phi$
\STATE Initialize learning rate $\beta_{Q}$, $\beta_{\pi}$, $\beta_{\alpha}$ and $\tau$ 
\STATE Initialize iteration index $k=0$

\REPEAT
\STATE Select action $a\sim\pi_{\phi}(a|s)$
\STATE Observe reward $r$ and new state $s'$
\STATE Store transition tuple $(s,a,r,s')$ in buffer $\mathcal{B}$
\STATE Sample $N$ transitions $(s,a,r,s')$ from $\mathcal{B}$
\STATE Update soft Q-function $\theta \leftarrow \theta - \beta_{Q}\nabla_{\theta}J_{Q}(\theta)$
\IF{$k$ mod $m$}
\STATE Update policy $\phi \leftarrow \phi + \beta_{\pi}\nabla_{\phi}J_{\pi}(\phi)$ 
\STATE Adjust temperature $\alpha \leftarrow \alpha - \beta_{\alpha}\nabla_{\alpha} J(\alpha)$
\STATE Update target networks:
\STATE \qquad $\theta' \leftarrow  \tau\theta+(1-\tau)\theta'$, $\phi' \leftarrow  \tau\phi+(1-\tau)\phi'$ 
\ENDIF
\STATE $k=k+1$
\UNTIL Convergence  
\end{algorithmic}
\end{algorithm}

\subsection{TD4 Algorithm}
\label{appen:TD4}
Consider a parameterized state-action return distribution function $\mathcal{Z}_{\theta}(\cdot|s,a)$ and a deterministic policy $\pi_{\phi}(s)$, where $\theta$ and $\phi$ are parameters. The target networks $\mathcal{Z}_{\theta'}(\cdot|s,a)$ and $\pi_{\phi'}(s)$ are used to stabilize learning. 
The return distribution can be trained to minimize 
\begin{equation}
\nonumber
J_{\mathcal{Z}}(\theta)=-\Exp_{\substack{(s,a,r,s')\sim \mathcal{B},a'\sim\pi_{\phi'},\\ Z(s',a')\sim \mathcal{Z}_{\theta'}(\cdot|s',a')}}\Big[\log\mathcal{P}(\mathcal{T}^{\pi_{\phi'}}_{\mathcal{D}}Z(s,a)|\mathcal{Z}_{\theta}(\cdot|s,a))\Big],
\end{equation}
where
\begin{equation}
\nonumber
\mathcal{T}^{\pi}_{\mathcal{D}}Z(s,a) \overset{D}{=}r(s,a)+\gamma Z(s',a')
\end{equation}
and 
\begin{equation}
\nonumber
 a'=\pi_{\phi'}(s')+\epsilon, \ \epsilon\sim{\rm{clip}}(\mathcal{N}(0,\sigma^2),-c,c).
\end{equation}

The calculation of  $\nabla_{\theta}J_{\mathcal{Z}}(\theta)$ is similar to DSAC. The policy can be learned by directly maximizing the expected return 
\begin{equation}
\nonumber
J_{\pi}(\phi)=\mathbb{E}_{s\sim\mathcal{B}}\big[Q_{\theta}(s,\pi_{\phi}(s))\big].
\end{equation}
The pseudo-code is shown in Algorithm \ref{alg:TD4}.

\begin{algorithm}[!htb]
\caption{TD4 Algorithm}
\label{alg:TD4}
\begin{algorithmic}
\STATE Initialize parameters $\theta$, $\phi$ and $\alpha$
\STATE Initialize target parameters $\theta'\leftarrow\theta$, $\phi'\leftarrow\phi$
\STATE Initialize learning rate $\beta_{\mathcal{Z}}$, $\beta_{\pi}$, $\beta_{\alpha}$ and $\tau$ 
\STATE Initialize iteration index $k=0$
\REPEAT
\STATE Select action with exploration noise $a=\pi_{\phi}(s)+\epsilon$, $\epsilon\sim\mathcal{N}(0,\hat{\sigma}^2)$
\STATE Observe reward $r$ and new state $s'$
\STATE Store transition tuple $(s,a,r,s')$ in buffer $\mathcal{B}$
\STATE Sample $N$ transitions $(s,a,r,s')$ from $\mathcal{B}$
\STATE Calculate action for target policy smoothing $a'=\pi_{\phi'}(s')+\epsilon$, $\epsilon\sim{\rm{clip}}(\mathcal{N}(0,\sigma^2),-c,c)$
\STATE Update return distribution $\theta \leftarrow \theta - \beta_{\mathcal{Z}}\nabla_{\theta}J_{\mathcal{Z}}(\theta)$
\IF{$k$ mod $m$}
\STATE Update policy $\phi \leftarrow \phi + \beta_{\pi}\nabla_{\phi} J_{\phi}(\phi)$
\STATE Update target networks:
\STATE \qquad $\theta' \leftarrow  \tau\theta+(1-\tau)\theta'$, $\phi' \leftarrow  \tau\phi+(1-\tau)\phi'$
\ENDIF
\STATE $k=k+1$
\UNTIL Convergence  
\end{algorithmic}
\end{algorithm}

\subsection{Hyperparameters}
\label{appen.hyper}
Table \ref{table.hyper} lists the hyperparameters of all algorithms.
\begin{table}[!htp]
\captionsetup{justification=centering,labelsep=newline,font=small}
\captionsetup{justification=centering,labelsep=newline,font={small,sc}}
\caption{Detailed hyperparameters.}
\label{table.hyper}
\begin{tabular}{lc}
\toprule
Hyperparameters & Value \\
\hline
\emph{Shared} & \\
\quad Optimizer &  Adam ($\beta_{1}=0.9, \beta_{2}=0.999$)\\
\quad Number of hidden layers & 5\\
\quad Number of hidden units per layer & 256\\
\quad Nonlinearity of hidden layer& GELU\\
\quad Replay buffer size & $5\times10^5$\\
\quad Batch size & 256\\
\quad Actor learning rate & $\cos$ anneal $5{\rm{e-}}5\rightarrow1{\rm{e-}}6 $\\
\quad Critic learning rate & $\cos$ anneal $8{\rm{e-}}5\rightarrow1{\rm{e-}}6 $\\
\quad Discount factor ($\gamma$) & 0.99\\
\quad Update interval ($m$)& 2\\
\quad Target smoothing coefficient ($\tau$) & 0.001\\
\quad Reward scale & 0.2\\
\quad Number of actor processes &6\\
\quad Number of learner processes &4\\
\quad Number of buffer processes &3\\ 
\hline
\emph{Stochastic policy} &\\ 
\quad  Learning rate of $\alpha$ & $\cos$ anneal $5{\rm{e-}}5\rightarrow1{\rm{e-}}6 $ \\
\quad  Expected entropy ($\overline{\mathcal{H}}$) &  $\overline{\mathcal{H}}=-{\rm{dim}}(\mathcal{A})$ \\
\hline
\emph{Deterministic policy} &\\ 
\quad Exploration noise&  $\epsilon \sim \mathcal{N}(0,0.1^2)$\\
\hline
\emph{Distributional value function} &\\ 
\quad Bounds of variance &  $\sigma_{\rm{min}}=1$\\
\quad Clipping boundary &  $b=10$ \\
\hline
\emph{TD4,TD3} &\\ 
\quad Policy smoothing noise &  $\epsilon\sim{\rm{clip}}(\mathcal{N}(0,0.2^2),-0.5,0.5)$\\
\bottomrule
\end{tabular}
\end{table}


\section*{Acknowledgment}
We would like to acknowledge Dongjie Yu for his valuable suggestions. \textcolor{black}{The authors are grateful to the Editor-in-Chief, the Associate Editor, and anonymous reviewers for their valuable comments.}


%

\ifCLASSOPTIONcaptionsoff
  \newpage
\fi



%
\bibliographystyle{ieeetr}
\bibliography{reference}

\begin{thebibliography}{10}

\bibitem{mnih2015DQN}
V.~Mnih, K.~Kavukcuoglu, D.~Silver, A.~A. Rusu, J.~Veness, M.~G. Bellemare,
  A.~Graves, M.~Riedmiller, A.~K. Fidjeland, G.~Ostrovski, {\em et~al.},
  ``Human-level control through deep reinforcement learning,'' {\em Nature},
  vol.~518, no.~7540, p.~529, 2015.

\bibitem{silver2016mastering}
D.~Silver, A.~Huang, C.~J. Maddison, A.~Guez, L.~Sifre, G.~Van Den~Driessche,
  J.~Schrittwieser, I.~Antonoglou, V.~Panneershelvam, M.~Lanctot, {\em et~al.},
  ``Mastering the game of go with deep neural networks and tree search,'' {\em
  Nature}, vol.~529, no.~7587, p.~484, 2016.

\bibitem{mnih2016A3C}
V.~Mnih, A.~P. Badia, M.~Mirza, A.~Graves, T.~P. Lillicrap, T.~Harley,
  D.~Silver, and K.~Kavukcuoglu, ``Asynchronous methods for deep reinforcement
  learning,'' in {\em Proceedings of the 33rd International Conference on
  Machine Learning, (ICML 2016)}, (New York City, NY, USA), pp.~1928--1937,
  2016.

\bibitem{silver2017mastering}
D.~Silver, J.~Schrittwieser, K.~Simonyan, I.~Antonoglou, A.~Huang, A.~Guez,
  T.~Hubert, L.~Baker, M.~Lai, A.~Bolton, {\em et~al.}, ``Mastering the game of
  go without human knowledge,'' {\em Nature}, vol.~550, no.~7676, p.~354, 2017.

\bibitem{duan2020hierarchical}
J.~Duan, S.~E. Li, Y.~Guan, Q.~Sun, and B.~Cheng, ``Hierarchical reinforcement
  learning for self-driving decision-making without reliance on labelled
  driving data,'' {\em IET Intelligent Transport Systems}, vol.~14, no.~5,
  pp.~297--305, 2020.

\bibitem{watkins1989Q-learning}
C.~J. C.~H. Watkins, {\em Learning from delayed rewards}.
\newblock PhD thesis, King's College, Cambridge, 1989.

\bibitem{sutton2018reinforcement}
R.~S. Sutton and A.~G. Barto, {\em Reinforcement learning: An introduction}.
\newblock MIT press, 2018.

\bibitem{van2016double_DQN}
H.~van Hasselt, A.~Guez, and D.~Silver, ``Deep reinforcement learning with
  double q-learning,'' in {\em Proceedings of the 30th Conference on Artificial
  Intelligence (AAAI 2016)}, (Phoenix, Arizona,{USA}), pp.~2094--2100, 2016.

\bibitem{thrun1993issues}
S.~Thrun and A.~Schwartz, ``Issues in using function approximation for
  reinforcement learning,'' in {\em Proceedings of the 1993 Connectionist
  Models Summer School}, (Hillsdale NJ. Lawrence Erlbaum), 1993.

\bibitem{lee2013bias}
D.~Lee, B.~Defourny, and W.~B. Powell, ``Bias-corrected q-learning to control
  max-operator bias in q-learning,'' in {\em 2013 IEEE Symposium on Adaptive
  Dynamic Programming and Reinforcement Learning (ADPRL)}, pp.~93--99, IEEE,
  2013.

\bibitem{lee2019bias}
D.~Lee and W.~B. Powell, ``Bias-corrected q-learning with multistate
  extension,'' {\em IEEE Transactions on Automatic Control}, vol.~64, no.~10,
  pp.~4011--4023, 2019.

\bibitem{Fujimoto2018TD3}
S.~Fujimoto, H.~van Hoof, and D.~Meger, ``Addressing function approximation
  error in actor-critic methods,'' in {\em Proceedings of the 35th
  International Conference on Machine Learning (ICML 2018)},
  (Stockholmsmässan, Stockholm Sweden), pp.~1587--1596, PMLR, 2018.

\bibitem{silver2014DPG}
D.~Silver, G.~Lever, N.~Heess, T.~Degris, D.~Wierstra, and M.~Riedmiller,
  ``Deterministic policy gradient algorithms,'' in {\em Proceedings of the 31st
  International Conference on Machine Learning (ICML 2014)}, (Bejing, China),
  pp.~387--395, PMLR, 2014.

\bibitem{lillicrap2015DDPG}
T.~P. Lillicrap, J.~J. Hunt, A.~Pritzel, N.~Heess, T.~Erez, Y.~Tassa,
  D.~Silver, and D.~Wierstra, ``Continuous control with deep reinforcement
  learning,'' in {\em 4th International Conference on Learning Representations
  (ICLR 2016)}, (San Juan, Puerto Rico), 2016.

\bibitem{hasselt2010double_Q}
H.~van Hasselt, ``Double q-learning,'' in {\em 23rd Advances in Neural
  Information Processing Systems (NeurIPS 2010)}, (Vancouver, British Columbia,
  Canada), pp.~2613--2621, 2010.

\bibitem{Haarnoja2018SAC}
T.~Haarnoja, A.~Zhou, P.~Abbeel, and S.~Levine, ``Soft actor-critic: Off-policy
  maximum entropy deep reinforcement learning with a stochastic actor,'' in
  {\em Proceedings of the 35th International Conference on Machine Learning
  (ICML 2018)}, (Stockholmsmässan, Stockholm Sweden), pp.~1861--1870, PMLR,
  2018.

\bibitem{Haarnoja2018ASAC}
T.~Haarnoja, A.~Zhou, K.~Hartikainen, G.~Tucker, S.~Ha, J.~Tan, V.~Kumar,
  H.~Zhu, A.~Gupta, P.~Abbeel, {\em et~al.}, ``Soft actor-critic algorithms and
  applications,'' {\em arXiv preprint arXiv:1812.05905}, 2018.

\bibitem{bellemare2017C51}
M.~G. Bellemare, W.~Dabney, and R.~Munos, ``A distributional perspective on
  reinforcement learning,'' in {\em Proceedings of the 34th International
  Conference on Machine Learning, (ICML 2017)}, (Sydney, NSW, Australia),
  pp.~449--458, 2017.

\bibitem{dabney2018quantileregre}
W.~Dabney, M.~Rowland, M.~G. Bellemare, and R.~Munos, ``Distributional
  reinforcement learning with quantile regression,'' in {\em Proceedings of the
  32nd Conference on Artificial Intelligence, (AAAI 2018)}, (New Orleans,
  Louisiana, USA), pp.~2892--2901, 2018.

\bibitem{davney2018quantilenet}
W.~Dabney, G.~Ostrovski, D.~Silver, and R.~Munos, ``Implicit quantile networks
  for distributional reinforcement learning,'' in {\em Proceedings of the 35th
  International Conference on Machine Learning (ICML 2018)},
  (Stockholmsmässan, Stockholm Sweden), pp.~1096--1105, PMLR, 2018.

\bibitem{rowland2018distributional_ana}
M.~Rowland, M.~Bellemare, W.~Dabney, R.~Munos, and Y.~W. Teh, ``An analysis of
  categorical distributional reinforcement learning,'' in {\em International
  Conference on Artificial Intelligence and Statistics, (AISTATS 2018)}, (Playa
  Blanca, Lanzarote, Canary Islands, Spain), pp.~29--37, PMLR, 2018.

\bibitem{lyle2019comparative_dstributional}
C.~Lyle, M.~G. Bellemare, and P.~S. Castro, ``A comparative analysis of
  expected and distributional reinforcement learning,'' in {\em Proceedings of
  the 33rd Conference on Artificial Intelligence (AAAI 2019)}, (Honolulu,
  Hawaii,USA), pp.~4504--4511, 2019.

\bibitem{barth-maron2018D4PG}
G.~Barth{-}Maron, M.~W. Hoffman, D.~Budden, W.~Dabney, D.~Horgan, D.~TB,
  A.~Muldal, N.~Heess, and T.~P. Lillicrap, ``Distributed distributional
  deterministic policy gradients,'' in {\em 6th International Conference on
  Learning Representations, (ICLR 2018)}, (Vancouver, BC, Canada), 2018.

\bibitem{schulman2015TRPO}
J.~Schulman, S.~Levine, P.~Abbeel, M.~I. Jordan, and P.~Moritz, ``Trust region
  policy optimization,'' in {\em Proceedings of the 32nd International
  Conference on Machine Learning, (ICML 2015)}, (Lille, France),
  pp.~1889--1897, 2015.

\bibitem{schulman2017PPO}
J.~Schulman, F.~Wolski, P.~Dhariwal, A.~Radford, and O.~Klimov, ``Proximal
  policy optimization algorithms,'' {\em arXiv preprint arXiv:1707.06347},
  2017.

\bibitem{heess2017DPPO}
N.~Heess, S.~Sriram, J.~Lemmon, J.~Merel, G.~Wayne, Y.~Tassa, T.~Erez, Z.~Wang,
  S.~Eslami, M.~Riedmiller, {\em et~al.}, ``Emergence of locomotion behaviours
  in rich environments,'' {\em arXiv preprint arXiv:1707.02286}, 2017.

\bibitem{ODonoghue2016PGQL}
B.~O'Donoghue, R.~Munos, K.~Kavukcuoglu, and V.~Mnih, ``Combining policy
  gradient and q-learning,'' in {\em 4th International Conference on Learning
  Representations (ICLR 2016)}, (San Juan, Puerto Rico), 2016.

\bibitem{schulman2017PG_Soft-Q}
J.~Schulman, X.~Chen, and P.~Abbeel, ``Equivalence between policy gradients and
  soft q-learning,'' {\em arXiv preprint arXiv:1704.06440}, 2017.

\bibitem{nachum2017bridging}
O.~Nachum, M.~Norouzi, K.~Xu, and D.~Schuurmans, ``Bridging the gap between
  value and policy based reinforcement learning,'' in {\em 30th Advances in
  Neural Information Processing Systems (NeurIPS 2017)}, (Long Beach, CA, USA),
  pp.~2775--2785, 2017.

\bibitem{sallans2004Boltzmann_explore}
B.~Sallans and G.~E. Hinton, ``Reinforcement learning with factored states and
  actions,'' {\em Journal of Machine Learning Research}, vol.~5, no.~8,
  pp.~1063--1088, 2004.

\bibitem{Fox2016G-learning}
R.~Fox, A.~Pakman, and N.~Tishby, ``Taming the noise in reinforcement learning
  via soft updates,'' in {\em Proceedings of the 32nd Conference on Uncertainty
  in Artificial Intelligence (UAI 2016)}, (Arlington, Virginia, United States),
  pp.~202--211, AUAI Press, 2016.

\bibitem{Haarnoja2017Soft-Q}
T.~Haarnoja, H.~Tang, P.~Abbeel, and S.~Levine, ``Reinforcement learning with
  deep energy-based policies,'' in {\em Proceedings of the 34th International
  Conference on Machine Learning, (ICML 2017)}, (Sydney, NSW, Australia),
  pp.~1352--1361, 2017.

\bibitem{dabney2020dopamine-basedRL}
W.~Dabney, Z.~Kurth-Nelson, N.~Uchida, C.~K. Starkweather, D.~Hassabis,
  R.~Munos, and M.~Botvinick, ``A distributional code for value in
  dopamine-based reinforcement learning,'' {\em Nature}, pp.~1--5, 2020.

\bibitem{kingma2013repa}
D.~P. Kingma and M.~Welling, ``Auto-encoding variational bayes,'' {\em arXiv
  preprint arXiv:1312.6114}, 2013.

\bibitem{Espeholt2018IMPALA}
L.~Espeholt, H.~Soyer, R.~Munos, K.~Simonyan, V.~Mnih, T.~Ward, Y.~Doron,
  V.~Firoiu, T.~Harley, I.~Dunning, S.~Legg, and K.~Kavukcuoglu, ``{IMPALA}:
  Scalable distributed deep-{RL} with importance weighted actor-learner
  architectures,'' in {\em Proceedings of the 35th International Conference on
  Machine Learning (ICML 2018)}, (Stockholmsmässan, Stockholm Sweden),
  pp.~1407--1416, PMLR, 2018.

\bibitem{horgan2018Ape-X}
D.~Horgan, J.~Quan, D.~Budden, G.~Barth-Maron, M.~Hessel, H.~Van~Hasselt, and
  D.~Silver, ``Distributed prioritized experience replay,'' {\em arXiv preprint
  arXiv:1803.00933}, 2018.

\bibitem{todorov2012mujoco}
E.~Todorov, T.~Erez, and Y.~Tassa, ``Mujoco: {A} physics engine for model-based
  control,'' in {\em 2012 {IEEE/RSJ} International Conference on Intelligent
  Robots and Systems, (IROS 2012)}, (Vilamoura, Algarve, Portugal),
  pp.~5026--5033, IEEE, 2012.

\bibitem{brockman2016openaigym}
G.~Brockman, V.~Cheung, L.~Pettersson, J.~Schneider, J.~Schulman, J.~Tang, and
  W.~Zaremba, ``Openai gym,'' {\em arXiv preprint arXiv:1606.01540}, 2016.

\bibitem{hendrycks2016gelu}
D.~Hendrycks and K.~Gimpel, ``Gaussian error linear units (gelus),'' {\em arXiv
  preprint arXiv:1606.08415}, 2016.

\bibitem{Diederik2015Adam}
D.~P. Kingma and J.~Ba, ``Adam: {A} method for stochastic optimization,'' in
  {\em 3rd International Conference on Learning Representations, (ICLR 2015)},
  (San Diego, CA, USA), 2015.

\end{thebibliography}
\begin{IEEEbiography}[{\includegraphics[width=1in,height=1.25in,clip,keepaspectratio]{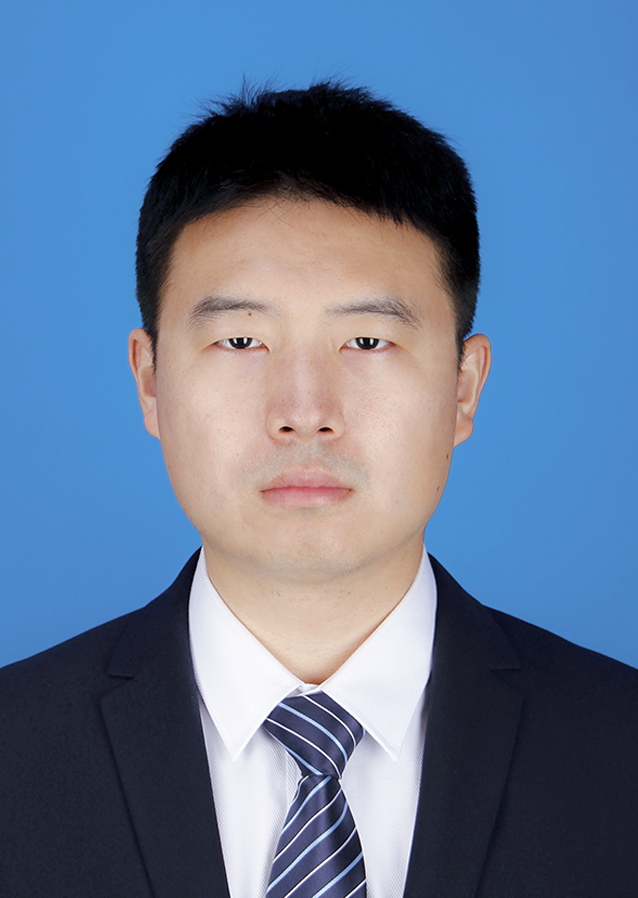}}]{Jingliang Duan}
received the B.S. degree from the College of Automotive Engineering, Jilin University, Changchun, China, in 2015. He studied as a visiting student researcher in Department of Mechanical Engineering, University of California, Berkeley, USA, in 2019. He received his Ph.D. degree in the School of Vehicle and Mobility, Tsinghua University, Beijing, China, in 2021. His research interests include decision and control of autonomous vehicle, reinforcement learning and adaptive dynamic programming, and driver behaviour analysis.
\end{IEEEbiography}
\vskip -2\baselineskip plus -1fil
\begin{IEEEbiography}[{\includegraphics[width=1in,height=1.25in,clip,keepaspectratio]{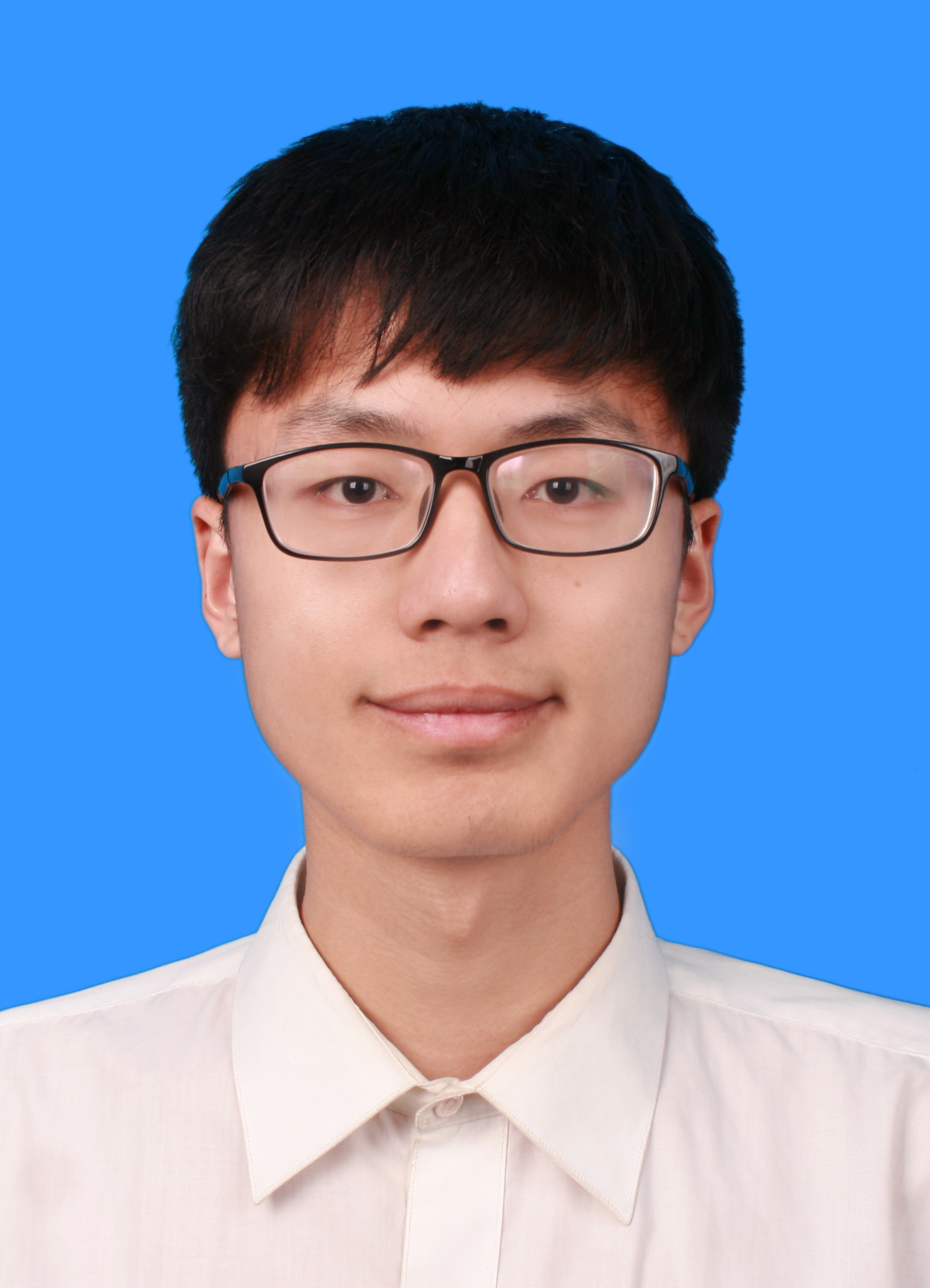}}]{Yang Guan}
received the B.S. degree from school of mechanical engineering, Beijing institute of technology, Beijing, China, in 2017. He is pursuing his Ph.D. degree in the School of Vehicle and Mobility, Tsinghua University, Beijing, China. His research interests include decision-making of autonomous vehicle, and reinforcement learning.
\end{IEEEbiography}
\vskip -2\baselineskip plus -1fil
\begin{IEEEbiography}[{\includegraphics[width=1in,height=1.25in,clip,keepaspectratio]{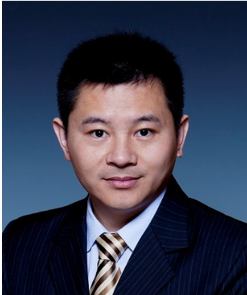}}]{Shengbo Eben Li}
(SM'16) received the M.S. and Ph.D. degrees from Tsinghua University in 2006 and 2009. He worked at Stanford University, University of Michigan, and University of California, Berkeley. He is currently a tenured professor at Tsinghua University. His active research interests include intelligent vehicles and driver assistance, reinforcement learning and distributed control, optimal control and estimation, etc.

He is the author of over 100 journal/conference papers, and the co-inventor of over 20 Chinese patents. He was the recipient of Best Paper Award in 2014 IEEE ITS Symposium, Best Paper Award in 14th ITS Asia Pacific Forum, National Award for Technological Invention in China (2013), Excellent Young Scholar of NSF China (2016), Young Professorship of Changjiang Scholar Program (2016). He is now the IEEE senior member and serves as associated editor of IEEE ITSM and IEEE Trans. ITS, etc.
\end{IEEEbiography}
\vskip -2\baselineskip plus -1fil
\begin{IEEEbiography}[{\includegraphics[width=1in,height=1.25in,clip,keepaspectratio]{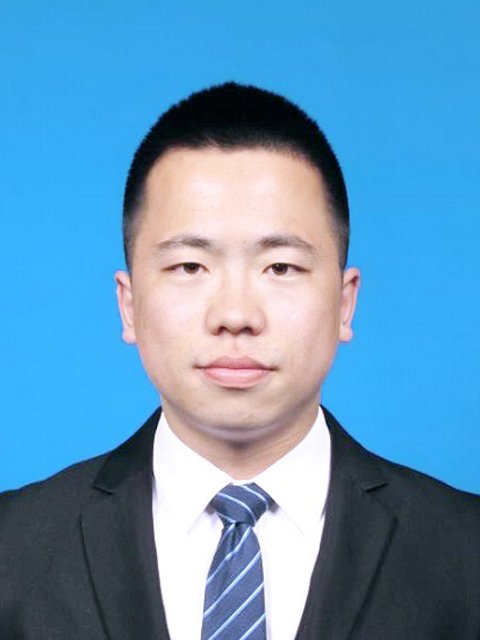}}]{Yangang Ren}
received the B.S. degree from the Department of Automotive Engineering, Tsinghua University, Beijing, China, in 2018. He is currently pursuing his Ph.D. degree in the School of Vehicle and Mobility, Tsinghua University, Beijing, China. His research interests include decision and control of autonomous driving, reinforcement learning, and adversarial learning.
\end{IEEEbiography}
\vskip -2\baselineskip plus -1fil
\begin{IEEEbiography}[{\includegraphics[width=1in,height=1.25in,clip,keepaspectratio]{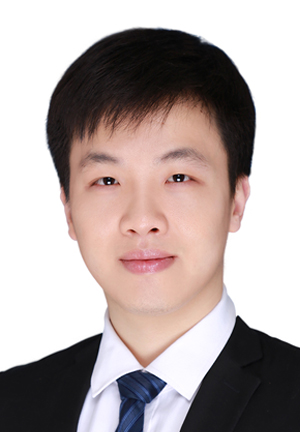}}]{Qi Sun}
received his Ph.D. degree in Automotive Engineering from Ecole Centrale de Lille, France, in 2017. He did scientific research and completed his Ph.D. dissertation in CRIStAL Research Center at Ecole Centrale de Lille, France, between 2013 and 2016. He is currently a Postdoctor at the State Key Laboratory of Automotive Safety and Energy and at the School  of  Vehicle  and  Mobility, Tsinghua University, Beijing, China. His active research interests include intelligent vehicles, automatic driving technology, distributed control and optimal control.
\end{IEEEbiography}
\vskip -2\baselineskip plus -1fil
\begin{IEEEbiography}[{\includegraphics[width=1in,height=1.25in,clip,keepaspectratio]{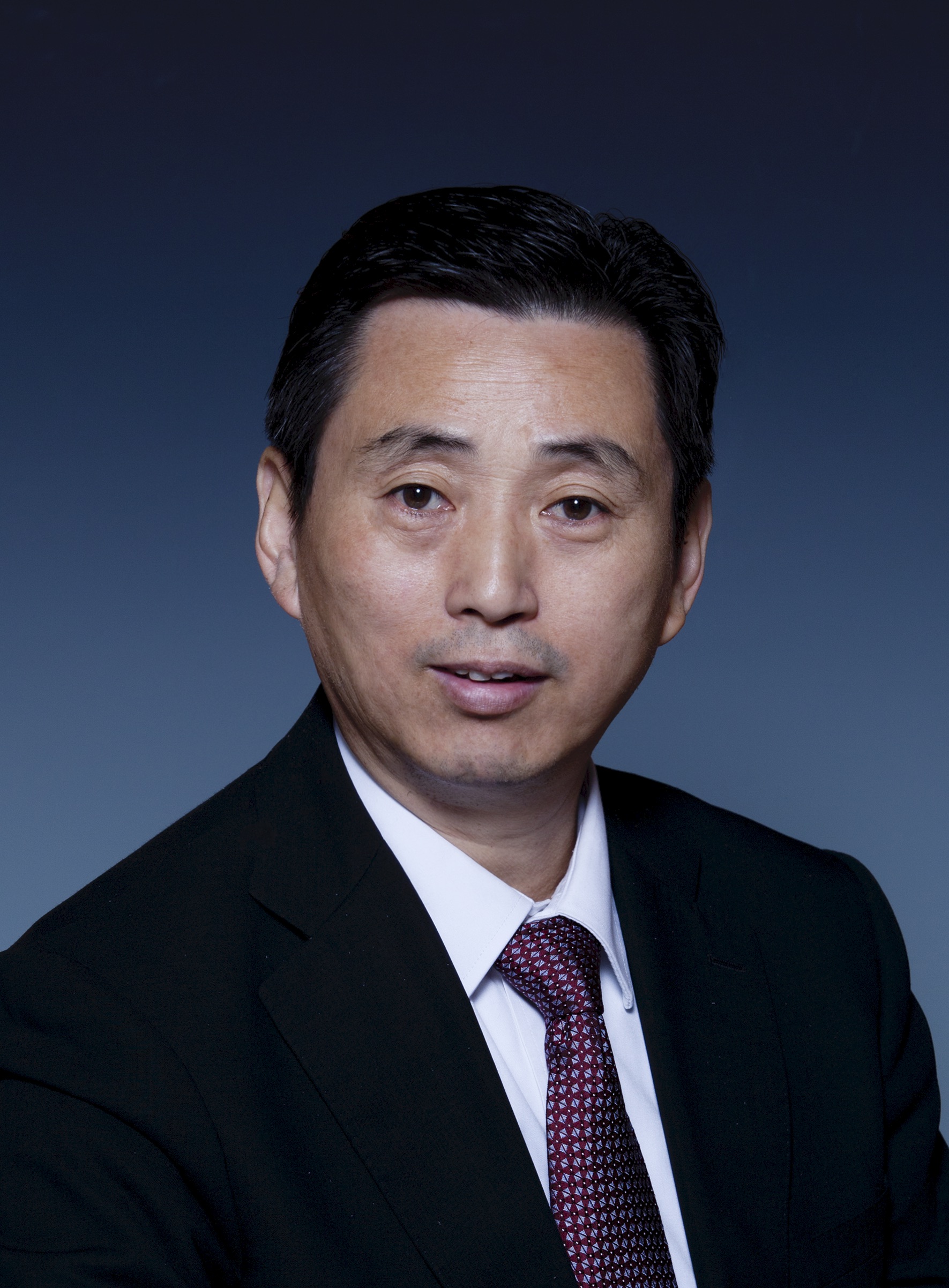}}]{Bo Cheng}
received the B.S. and M.S. degrees in automotive engineering from Tsinghua University, Beijing, China, in 1985 and 1988, respectively, and the Ph.D. degree in mechanical engineering from the University of Tokyo, Tokyo, Japan, in 1998. He is currently a Professor with School of Vehicle and Mobility, Tsinghua University, and the Dean of Tsinghua University–Suzhou Automotive Research Institute. He is the author of more than 100 peer-reviewed journal/conference papers and the co-inventor of 40 patents. His active research interests include autonomous vehicles, driver-assistance systems, active safety, and vehicular ergonomics, among others. Dr. Cheng is also the Chairman of the Academic Board of SAE-Beijing, a member of the Council of the Chinese Ergonomics Society, and a Committee Member of National 863 Plan, among others.
\end{IEEEbiography}






\end{document}